\documentclass{article}

\usepackage{iclr2027_conference, times}  
\iclrfinalcopy

\usepackage{amsmath, amssymb, amsthm, graphicx}
\usepackage{hyperref}
\usepackage{url}
\usepackage{xcolor}
\usepackage[linesnumbered,ruled,vlined]{algorithm2e}
\usepackage{float}
\usepackage{tikz}
\usetikzlibrary{arrows.meta, positioning}
\usepackage{longtable}
\usepackage{booktabs}
\usepackage{multirow}
\usepackage[utf8]{inputenc}
\usepackage{array}
\usepackage{placeins}
\usepackage{graphicx}
\usepackage{caption}

\title{OrBIT: Structure-Guided Embedding Compression\\[0.4em]
\rule{\textwidth}{0.4pt}}
 
\author{%
  Yunied Puig \\
  \texttt{puigdedios@gmail.com}
  \And
  Amit Kumar Jaiswal \\
  Jay Chaudhry Software Innovation Centre \\
  Indian Institute of Technology (BHU) \\
  Varanasi, India \\
  \texttt{ amit.chr@iitbhu.ac.in}
}

\newtheorem{theorem}{Theorem}[section]
\newtheorem{lemma}[theorem]{Lemma}
\newtheorem{proposition}[theorem]{Proposition}
\newtheorem{corollary}[theorem]{Corollary}
\newtheorem{definition}[theorem]{Definition}

\newtheorem{remark}[theorem]{Remark}

\begin{document}
\maketitle
\fancyhead{}
\renewcommand{\headrulewidth}{0pt}

\begin{abstract}
Embedding tables are among the largest components of modern language models. Most compression methods fix a coding geometry such as coordinate blocks, low-rank subspaces, or unrestricted codebooks, and optimize within it. We instead ask whether the coding geometry can itself be discovered. We introduce \emph{OrBIT}, a structure-guided embedding compression framework that learns reusable local geometry from orbit dynamics and uses it to constrain a small set of shared codewords. The global reconstruction residual then decides where the fixed coding budget is spent, while redundant overlapping charts let local errors compensate one another after gluing. Our theory shows how tight-chart geometry controls distortion, how the global residual directs sequential allocation, and how data-geometry-guided refinement improves the codec. The resulting orbit machinery is compiled away, leaving a compact decoder in which the learned structure governs what is stored, where capacity is allocated, and how local information is assembled globally. Across four LLM embedding tables, OrBIT achieves $37.9\times$ compression on GPT-2 and over $23\times$ on each 7B table relative to 16-bit storage, while delivering competitive rate-distortion performance against established quantization and low-rank baselines.
\end{abstract}

\section{Introduction}
\label{intro}
Embedding tables are a major memory cost in modern language models, but most compression methods begin by imposing a coding geometry coordinate blocks, low-dimensional subspaces, or unrestricted codebooks, and then optimize within it. We ask whether the compression geometry itself can instead be discovered from dynamics. We bring orbit ideas from linear dynamics into embedding compression: from a small anchor set, learned orbit scaffolds expose reusable local structure that constrains what can be shared, while overlapping charts coordinate these local representations globally. This leads to a local-to-global compression principle in which orbit geometry discovers structure, residual information orders the fixed stage budget and determines each stage target, and redundant synthesis assembles the result through cross-chart error repair.


Let $\mathcal H=\mathbb R^d$ be the embedding Hilbert space and let $\mathcal V=\{1,\ldots,D\}$ be a finite vocabulary. An embedding table is $E=(e_1,\ldots,e_D)\in\mathcal H^D$, with $e_n\in\mathcal H$. We equip $\mathcal H^D$ with the weighted empirical table norm $\|E\|_\mu^2:=\sum_{n=1}^D\mu_n\|e_n\|_{\mathcal H}^2$, where $\mu_n>0$ and $\sum_{n=1}^D\mu_n=1$. The weights may be uniform, yielding a pure embedding-table distortion, or frequency-weighted, yielding a task-weighted distortion. Let $\mathcal Z_1,\ldots,\mathcal Z_L$ be local chart spaces and let $R_\ell:\mathcal H\to\mathcal Z_\ell$ be bounded analysis maps. We write $\mathcal Z:=\mathcal Z_1\oplus\cdots\oplus\mathcal Z_L$, $Rh:=(R_1h,\ldots,R_Lh)$, and
\[
Z^{(\ell)}:=R_\ell E=(R_\ell e_1,\ldots,R_\ell e_D)\in\mathcal Z_\ell^D.
\]

Compression in chart $\ell$ is an identity-realization problem: for $z=R_\ell e_n$, the target is $z$ itself. We therefore retain the orbit-realization mechanism of \citet{puig2026orbit}. A small anchor subset of $Z^{(\ell)}$ generates a wrapped weighted backward shift $W_\ell$, blockwise exact interpolants $J_{\ell,k}=Y_{\ell,k}Z_{\ell,k}^{\dagger}$, an orbit dictionary $\mathcal D_\ell$, and the associated scaffold $S_\ell=\operatorname{span}\mathcal D_\ell$. OrBIT, however, is not tokenwise CCDS coding. It uses this scaffold to constrain a small collection of shared additive codewords, which are compiled into a finite decoder and amortized over the vocabulary. The orbit machinery is therefore not itself what directly saves bits. Its role is to constrain and construct the shared continuous objects that make this amortization effective.

Our central contribution: OrBIT couples anchor-generated orbit dictionaries with shared additive coding on overlapping tight charts. Its analysis separates the local error that survives synthesis, derives exact chart targets for residual-directed construction and refinement, and accounts for every object retained by the finite decoder. The experiments evaluate the resulting codec and distinguish data alignment, refinement, redundancy, and compilation effects.


\section{Related Work}
Product quantization (PQ) uses disjoint coordinate subspaces, while optimized product quantization (OPQ) learns a rotation before applying the product decomposition \citep{jegou2010product, ge2013optimized}. Additive quantization (AQ) uses unrestricted shared additive codewords; residual and stacked quantizers construct codebooks sequentially and refine them against reconstruction error \citep{babenko2014additive, martinez2014stacked}. QINCo and QINCo2 further condition codebooks on the partial reconstruction \citep{huijben2024qinco, vallaeys2025qinco2}. OrBIT combines overlapping tight charts, anchor-generated orbit dictionaries, residual-directed chart scheduling, and refinement evaluated on the compiled global decoder. Shared additive coding, residual initialization, and codebook refinement are established ingredients of this construction.

 Compositional embedding codes, hashing, subspace methods, and tensor factorizations likewise amortize shared parameters across tokens \citep{chen2018learning, tito2017hash, jaiswal2023lightweight, hrinchuk2020tensorized}. Sparse coding and dictionary learning provide another route to reusable data-dependent atoms \citep{olshausen1997sparse, aharon2006k}. OrBIT inherits wrapped shifts, exact block interpolation, and CCDS realization from \citet{puig2026orbit}, and applies them to shared compression codewords. Exact interpolation fixes the full scaffold to the anchor-target span; orbit dynamics organize the dictionary and realization within that span. The construction state is discarded after compilation.

 Redundant frame representations and multiple-description coding already exploit redundancy in quantization and reconstruction \citep{goyal1998quantized, goyal2001mdc, cvetkovic2003resilience}. OrBIT uses canonical tight-frame synthesis to couple local shared-codebook representations. Theorem \ref{thm:redundant orbit add comprssion decomposition} identifies the local error component removed by synthesis; Propositions \ref{prop:nested sequential} and \ref{prop:gluing aware refinement} express the corresponding sequential and refinement objectives. Construction state is discarded, while charts, shared codewords, and \(LT\) token indices remain. Equation (\ref{eq:compiled-storage}) accounts for the transmitted chart description, codewords, and token indices.

\section{ORBIT-STRUCTURED ADDITIVE COMPRESSION AND FRAME GLUING}
\label{sec:redundant orbit based chart}
We first state the finite decoder and its storage law, then explain how the orbit framework constructs its shared codewords. We finally derive the global error decomposition, nested codebook allocation, and post-gluing refinement. Proofs are deferred to Appendix~\ref{app:proofs}.

\paragraph{Decoder and compression mechanism.}
A gluing map is a bounded linear operator $\Gamma:\mathcal Z\to\mathcal H$. It induces the pointwise table-level map $\Gamma^D:\mathcal Z_1^D\oplus\cdots\oplus\mathcal Z_L^D\to\mathcal H^D$, and we set $R_DE:=(Z^{(1)},\ldots,Z^{(L)})$. Its exact gluing defect is $\varepsilon_{\mathrm{glue}}(E):=\|E-\Gamma^D R_DE\|_\mu$.

Fix $T$ additive stages. For chart $\ell$ and stage $t$, let $\mathcal C_{\ell,t}=\{c_{\ell,t,r}\in\mathcal Z_\ell:1\le r\le Q_{\ell,t}\}$ be a shared codebook. Token $n$ stores one index $i_{\ell,n,t}\in\{1,\ldots,Q_{\ell,t}\}$ for each chart-stage pair and has decoded local representation
\begin{equation}
\widehat z_{\ell,n}:=\sum_{t=1}^T c_{\ell,t,i_{\ell,n,t}},
\qquad
\widehat Z^{(\ell)}:=(\widehat z_{\ell,1},\ldots,\widehat z_{\ell,D}).
\label{eq:orbit-additive-local}
\end{equation}
Under the canonical gluing map defined below, the final reconstructed embedding table is
\begin{equation}
\widehat E:=\Gamma^D\big(\widehat Z^{(1)},\ldots,\widehat Z^{(L)}\big).
\label{eq:orbit-additive-global}
\end{equation}
That is the entire decoder: the $\sum_{\ell,t}Q_{\ell,t}$ continuous codewords are shared, while each token stores only the $LT$ discrete indices. The remaining question is how orbit geometry constructs and constrains these shared codewords.

\paragraph{Orbit scaffold and shared-codeword realization.}
For each $1\le\ell\le L$, let $\mathcal Z_\ell:=\mathbb R^{M_\ell}$, let $\{u_{\ell,0},\ldots,u_{\ell,M_\ell-1}\}$ denote its canonical basis, and let $w_\ell=(w_{\ell,0},\ldots,w_{\ell,M_\ell-1})$ be nonzero weights. The wrapped construction of \citet{puig2026orbit} takes indices modulo the ambient dimension, so coordinates circulate rather than disappear.

\begin{definition}[The wrapped weighted backward shift, \citep{puig2026orbit}]
A wrapped weighted backward shift $W_\ell:\mathcal Z_\ell\to \mathcal Z_\ell$ is defined by
$W_\ell u_{\ell,j}=w_{\ell,j}u_{\ell,j-1\;(\mathrm{mod}\;M_\ell)}$,
for $j=0,\ldots,M_\ell-1$, where $w_{\ell,j}\neq0$.
\end{definition}

Compression in chart $\ell$ is the identity-realization problem $x_j^{(\ell)}=y_j^{(\ell)}=R_\ell e_j$. Choose a small anchor subset \(I_\ell \subseteq \{1,\ldots,D\}\), an initial partition
\(I_\ell = I_{\ell,1}\cup\cdots\cup I_{\ell,K_\ell}\), and learn \(W_\ell\).
The partition is refined as necessary until, in every block, depths
\(L_{\ell,j}\in\{0,\ldots,M_\ell-1\}\) can be chosen so that the orbit representatives
\(z_j^{(\ell)}:=W_\ell^{L_{\ell,j}}R_\ell e_j\), \(j\in I_{\ell,k}\), are linearly independent. Writing these representatives and their targets as the columns of $Z_{\ell,k}$ and $Y_{\ell,k}$, respectively, set $J_{\ell,k}:=Y_{\ell,k}Z_{\ell,k}^{\dagger}$. Then $J_{\ell,k}W_\ell^{L_{\ell,j}}R_\ell e_j=R_\ell e_j$ for every $j\in I_{\ell,k}$.

The resulting local orbit dictionary is
\begin{equation}
\label{eq:dictionary}
    \mathcal D_\ell
    :=
    \left\{
        J_{\ell,k}W_\ell^{m}R_\ell e_j
        :
        j\in I_{\ell,k},\
        1\le k\le K_\ell,\
        0\le m\le M_\ell-1
    \right\},
\end{equation}
and its scaffold is $S_\ell:=\operatorname{span}\mathcal D_\ell\subseteq\mathcal Z_\ell$, with orthogonal projector $\Pi_{S_\ell}$. 
The candidate scaffold is retained only if it captures the full local table:
\begin{equation}
\label{eq:scaffold-accept}
\varepsilon_{\rm proj,\ell}:=\|(I-\Pi_{S_\ell})Z^{(\ell)}\|_\mu\le\tau_{\rm proj}.
\end{equation}
The anchors therefore generate the orbit geometry, whereas acceptance on $Z^{(\ell)}$ ensures that this geometry controls the full-table projection bottleneck.

For a target $p\in\mathcal Z_\ell$, write $\operatorname{CCDS}_{\mathcal D_\ell}(p)\in S_\ell$ for the output of the constructive control-deviation solver of \citet{puig2026orbit}. It is a sparse linear combination of atoms of $\mathcal D_\ell$ constructed to approximate $\Pi_{S_\ell} p$, i.e. the component of $p$ captured by $S_\ell$. CCDS is our default realizer because it supplies the deviation-controlled sparse realization developed in \citep{puig2026orbit}; other sparse realizers, such as Orthogonal Matching Pursuit (OMP), can be substituted without changing the local-to-global compression architecture. Further details of the finite-dimensional orbit construction and its implementation in the present compression encoder are provided in Appendix~\ref{app:implementation} for completeness and reproducibility.
Given a prototype $v_{\ell,t,r}\in\mathcal Z_\ell$, its orbit realization is $$\widetilde c_{\ell,t,r}:=\operatorname{CCDS}_{\mathcal D_\ell}(v_{\ell,t,r}).$$
Each orbit realization $\widetilde c_{\ell,t,r}\in S_\ell\subseteq\mathbb R^{M_\ell}$ is transmitted directly in chart coordinates. If $Q_{b_\gamma}$ is the finite-bit coordinate quantizer, the orbit realization is compiled as
\begin{equation}
\label{eq:compilation}
\widetilde c_{\ell,t,r}
\longmapsto c_{\ell,t,r}:=Q_{b_\gamma}(\widetilde c_{\ell,t,r}).
\end{equation}
Every codeword created during sequential allocation or refinement (processes addressed below) passes through this same map.

\paragraph{Redundant frame gluing.}
To quantify how redundant chart errors propagate under gluing, we introduce the frame-compatible structure governing the global analysis map.
\begin{definition}[Frame-compatible chart system]
We say that the analysis maps $\{R_\ell\}_{\ell=1}^L$ form a frame-compatible chart system
with bounds $0<A\leq B<\infty$ if
\[
    A\|h\|_{\mathcal H}^2
    \leq
    \sum_{\ell=1}^L \|R_\ell h\|_{\mathcal Z_\ell}^2
    \leq
    B\|h\|_{\mathcal H}^2,
    \
    \forall h\in\mathcal H.
\]
If $A=B$, we call the system $A$-tight.
\end{definition}

In the $A$-tight case, $R^*R=A\,\mathrm{Id}_{\mathcal H}$ and $\|R\|=\sqrt A$. The canonical gluing map is $\Gamma_A:=A^{-1}R^*$, so $\Gamma_AR=\mathrm{Id}_{\mathcal H}$, $\varepsilon_{\mathrm{glue}}(E)=0$, and $\|\Gamma_A\|=A^{-1/2}$. We refer to \citet{christensen2008frames} for frame theory and to Appendix~\ref{app:normalization redundancy} for the unit-gain normalization under which $A$ measures genuine chart redundancy.

\paragraph{Local-to-global error decomposition.}
Let $\widetilde Z^{(\ell)}$ denote the analogous local additive table formed from the uncompiled orbit realizations $\widetilde c_{\ell,t,r}$, and set $\widetilde Z:=(\widetilde Z^{(1)},\ldots,\widetilde Z^{(L)})$. Define
$\varepsilon_{\rm proj}^2(E):=\sum_{\ell=1}^L\|Z^{(\ell)}-\Pi_{S_\ell}Z^{(\ell)}\|_\mu^2$,
$\varepsilon_{\rm OA}^2(E):=\sum_{\ell=1}^L\|\Pi_{S_\ell}Z^{(\ell)}-\widetilde Z^{(\ell)}\|_\mu^2$,
$\varepsilon_{\rm comp}^2(E):=\sum_{\ell=1}^L\|\widetilde Z^{(\ell)}-\widehat Z^{(\ell)}\|_\mu^2$, and
$\varepsilon_{\rm loc}^2(E):=\sum_{\ell=1}^L\|Z^{(\ell)}-\widehat Z^{(\ell)}\|_\mu^2$.
Since $\widetilde Z^{(\ell)}\in S_\ell^D$, orthogonality gives
$\|R_DE-\widetilde Z\|_\mu^2=\varepsilon_{\rm proj}^2(E)+\varepsilon_{\rm OA}^2(E)$,
and hence
$\varepsilon_{\rm loc}(E)\le(\varepsilon_{\rm proj}^2(E)+\varepsilon_{\rm OA}^2(E))^{1/2}+\varepsilon_{\rm comp}(E)$.

\begin{theorem}[Redundant orbit-additive compression decomposition]
\label{thm:redundant orbit add comprssion decomposition}
Let $E\in\mathcal H^D$, let $R:\mathcal H\to\mathcal Z_1\oplus\cdots\oplus\mathcal Z_L$ be a bounded analysis map, and let $\Gamma:\mathcal Z_1\oplus\cdots\oplus\mathcal Z_L\to\mathcal H$ be a bounded gluing map. Set $\widehat Z=(\widehat Z^{(1)},\ldots,\widehat Z^{(L)})$ and $\widehat E=\Gamma_D\widehat Z$. If
$\varepsilon_{\ker}(E):=\|\Pi_{\ker\Gamma_D}(R_DE-\widehat Z)\|_\mu$,
then
\begin{equation}
\|E-\widehat E\|_\mu
\le
\varepsilon_{\rm glue}(E)
+\|\Gamma\|
\big(\varepsilon_{\rm loc}^2(E)-\varepsilon_{\ker}^2(E)\big)^{1/2}.
\label{eq:general-orbit-additive}
\end{equation}
If $\{R_\ell\}_{\ell=1}^L$ is $A$-tight and $\Gamma=\Gamma_A=A^{-1}R^*$, then $\varepsilon_{\rm glue}(E)=0$ and
\begin{equation}
\|E-\widehat E\|_\mu
=
\frac{1}{\sqrt A}
\big(\varepsilon_{\rm loc}^2(E)-\varepsilon_{\rm inc}^2(E)\big)^{1/2},
\label{eq:tight-orbit-additive}
\end{equation}
where $\varepsilon_{\rm inc}(E):=\operatorname{dist}_\mu(\widehat Z,\operatorname{Ran}R_D)$.
\end{theorem}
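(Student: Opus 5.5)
The plan is to split the global error into the exact-gluing defect plus the glued local error, and then to discard the part of the local error that $\Gamma_D$ annihilates. Throughout I write $\Gamma_D$ for the pointwise table-level map $\Gamma^D$. The space $\mathcal Z^D$ carries the weighted norm $\|X\|_\mu^2=\sum_n\mu_n\|x_n\|^2$, which is a Hilbert norm, so orthogonal projections and Pythagoras apply there. With respect to this inner product, $\Gamma_D$ is bounded with $\|\Gamma_D\|\le\|\Gamma\|$, because it acts tokenwise with common weights $\mu_n$.

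First step (the general bound \eqref{eq:general-orbit-additive}). Write
\[
E-\widehat E=(E-\Gamma_DR_DE)+\Gamma_D(R_DE-\widehat Z).
\]
The triangle inequality bounds the first term by $\varepsilon_{\rm glue}(E)$. For the second term, set $X:=R_DE-\widehat Z$ and decompose $X=\Pi_{\ker\Gamma_D}X+(I-\Pi_{\ker\Gamma_D})X$. Then $\Gamma_DX=\Gamma_D(I-\Pi_{\ker\Gamma_D})X$, so $\|\Gamma_DX\|_\mu\le\|\Gamma\|\,\|(I-\Pi_{\ker\Gamma_D})X\|_\mu$. By Pythagoras,
\[
\|(I-\Pi_{\ker\Gamma_D})X\|_\mu^2=\|X\|_\mu^2-\varepsilon_{\ker}^2(E).
\]
Finally, $\|X\|_\mu^2=\sum_\ell\|Z^{(\ell)}-\widehat Z^{(\ell)}\|_\mu^2=\varepsilon_{\rm loc}^2(E)$, because the direct-sum norm on $\mathcal Z$ is the sum of the chart norms. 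This proves the first claim.

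Second step (the tight case \eqref{eq:tight-orbit-additive}). Here $\Gamma_AR=\mathrm{Id}$, so $\varepsilon_{\rm glue}=0$ and $E-\widehat E=\Gamma_D X$. The key observation is that $\Gamma_A=A^{-1}R^*$ satisfies $\ker\Gamma_A=\ker R^*=(\operatorname{Ran}R)^\perp$. This lifts tokenwise: $\ker\Gamma_D=(\operatorname{Ran}R_D)^\perp$ in $\mathcal Z^D$ with the $\mu$-inner product, since the weights are positive and the same on every chart.

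Now $R_DE\in\operatorname{Ran}R_D$, so $\Pi_{\ker\Gamma_D}X=-\Pi_{(\operatorname{Ran}R_D)^\perp}\widehat Z$. Its norm is exactly $\operatorname{dist}_\mu(\widehat Z,\operatorname{Ran}R_D)=\varepsilon_{\rm inc}(E)$, and in particular $\varepsilon_{\ker}=\varepsilon_{\rm inc}$ in this case. The remaining component $Y:=\Pi_{\operatorname{Ran}R_D}X$ equals $R_DH$ for some $H\in\mathcal H^D$. Then
\[
\Gamma_DY=A^{-1}R_D^*R_DH=H,
\]
using the tokenwise identity $R^*R=A\,\mathrm{Id}$. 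Hence $\|Y\|_\mu^2=\sum_n\mu_n\|Rh_n\|^2=A\|H\|_\mu^2$. It follows that $\|E-\widehat E\|_\mu=\|H\|_\mu=A^{-1/2}\|Y\|_\mu$, and Pythagoras gives $\|Y\|_\mu^2=\varepsilon_{\rm loc}^2-\varepsilon_{\rm inc}^2$. This yields the stated equality. The argument shows that $\Gamma_A$ acts on $\operatorname{Ran}R_D$ as $A^{-1/2}$ times an isometry, which is why the first step's inequality becomes an equality.

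Main obstacle. The only delicate point is identifying $\ker\Gamma_D$ with $(\operatorname{Ran}R_D)^\perp$ at the table level under the weighted inner product. I would verify this directly from the tokenwise definitions. For $\mu_n>0$, the $\mu$-adjoint of $R_D$ acts as $R^*$ in each token, so orthogonal complements factor over tokens. The same tokenwise argument gives $\|\Gamma_D\|=\|\Gamma\|$.
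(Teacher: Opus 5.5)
Your proposal is correct and follows essentially the same route as the paper's proof. The paper also splits $E-\widehat E=(E-\Gamma_DR_DE)+\Gamma_D(R_DE-\widehat Z)$ and projects onto $\ker\Gamma_D$ with Pythagoras for the general bound; in the tight case it uses $\ker\Gamma_A^D=(\operatorname{Ran}R_D)^\perp$ and the fact that $\Gamma_A^D$ scales norms on $\operatorname{Ran}R_D$ by $A^{-1/2}$. Your explicit checks that $\|R_DE-\widehat Z\|_\mu=\varepsilon_{\rm loc}(E)$ and that $\varepsilon_{\ker}=\varepsilon_{\rm inc}$ in the tight case fill in small steps the paper leaves implicit.
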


The theorem separates local approximation from global frame geometry: the orbit construction controls $\varepsilon_{\rm proj}$ and $\varepsilon_{\rm OA}$, compilation contributes $\varepsilon_{\rm comp}$, while canonical synthesis removes the inconsistency component $\varepsilon_{\rm inc}$. Consequently, the relevant construction objective is the error that survives global gluing, motivating the residual-driven codebook construction below.

\begin{remark}[Relation to product-code geometry]
\label{rem:relation to product code compression}
The nonredundant orthogonal geometry underlying product codes is recovered as a special case. If
$\mathcal H=\mathcal H_1\oplus\cdots\oplus \mathcal H_F$ and $P_f: \mathcal H\to \mathcal H_f$ are the coordinate projections, then the analysis map
$R_\oplus h=(P_1h,\ldots,P_Fh)$ is an isometry and the corresponding gluing map is simple concatenation. Hence $A=1$ and there is no nontrivial inconsistency component to remove, so Theorem~\ref{thm:redundant orbit add comprssion decomposition} reduces to
\[
\|E-\widehat E_{\rm prod}\|_\mu
=
\varepsilon_{\rm loc,prod}(E).
\]
For redundant $A$-tight charts, instead, (\ref{eq:tight-orbit-additive}) holds.
Thus the distinction is not merely the factor $A^{-1/2}$: overlapping charts introduce a consistency geometry absent from product codes, and canonical gluing removes the component of the local reconstruction error lying outside $\operatorname{Ran}R_D$.
\end{remark}

\paragraph{Global-residual chart scheduling and initialization.}
Residual initialization and codebook refinement are standard in additive quantization (see Related Work). Here initialization is coupled to redundant synthesis: the compiled global residual selects the next chart and determines its target \(U^{(s)}=AR_{\ell_s}G^{(s-1)}\), which is realized through the orbit dictionary.

Set $G^{(0)}:=E$. At step $s$, let $\widehat E^{(s-1)}$ denote the global reconstruction produced by the first $s-1$ allocated stages, so that the current global residual is
$G^{(s-1)}=E-\widehat E^{(s-1)}$. Let $N_\ell^{(s-1)}$ denote the number of stages already allocated to chart $\ell$ after $s-1$ global construction steps. For $s=1,\ldots,LT$, define the eligible set
$\mathcal E_s:=\{\ell: N_\ell^{(s-1)}<T\}$ and choose
\begin{equation}
\ell_s\in\operatorname*{arg\,max}_{\ell\in\mathcal E_s}
\|R_\ell G^{(s-1)}\|_\mu,
\qquad
t_s:=N_{\ell_s}^{(s-1)}+1.
\label{eq:nested-chart-selection}
\end{equation}
The rule adapts construction order; the final budget remains \(T\) stages per chart.
Thus the next shared stage is assigned to the chart that currently sees the largest component of the unreconstructed global table. Its local target is
\begin{equation}
U^{(s)}=(u_1^{(s)},\dots,u_D^{(s)}):=A R_{\ell_s}G^{(s-1)}\in \mathcal Z_{\ell_s}^D.
\label{eq:nested-stage-target}
\end{equation}
Thus $U^{(s)}$ is not merely a local residual target: it is the component of the current globally glued reconstruction error visible to chart $\ell_s$.
Using residual prototypes from $U^{(s)}$, we construct the stage codebook $\mathcal C_{\ell_s,t_s}$ through the same orbit-realization and compilation map of (\ref{eq:compilation}). Once $C_{\ell_s,t_s}$ has been constructed, every token must choose one of its $Q$ codewords that best approximates its row of the current target $U^{(s)}$, namely
\begin{equation}
\label{eq:guaranteeing U and V are close enough}
i_{\ell_s,n,t_s}
\in
\arg\min_{1\le r\le Q}
\left\|
u_n^{(s)}
-
c_{\ell_s,t_s,r}
\right\|_2.
\end{equation}
Because the orbit realization and coordinate quantization may have moved the original prototypes $v_r^{(s)}$, this assignment is made against the actual resulting codewords $c_{\ell_s,t_s,r}$, not merely against the initial prototypes.
After these $D$ assignments have been made, we obtain
\begin{equation}
V^{(s)}
:=
\big(
c_{\ell_s,t_s,i_{\ell_s,1,t_s}},
\ldots,
c_{\ell_s,t_s,i_{\ell_s,D,t_s}}
\big)
\in \mathcal Z_{\ell_s}^D
\label{eq:nested-stage-table}
\end{equation}
which denotes the actual compiled stage table selected by the token assignments.
Its global contribution after gluing is
$
\frac1A R_{\ell_s}^*V^{(s)}.
$
Therefore the global residual evolves as
\begin{equation}
G^{(s)}
:=
G^{(s-1)}
-
\frac{1}{A}R_{\ell_s}^*V^{(s)}.
\label{eq:nested-residual-update}
\end{equation}
Hence the next stage is fitted after accounting for the actual compiled contribution of every previously allocated stage, including stages belonging to other charts.
So, after $s$ construction steps, the provisional global reconstruction is
$
\widehat E^{(s)}
=
\frac1A
\sum_{j=1}^{s}
R_{\ell_j}^*V^{(j)},
 \text{ and } 
G^{(s)}=E-\widehat E^{(s)}.
$
After all $LT$ stages, iterating \eqref{eq:nested-residual-update} from $G^{(0)}=E$ gives
\[
\widehat E
=
E-G^{(LT)}
=
\frac1A
\sum_{s=1}^{LT}
R_{\ell_s}^*V^{(s)}
=
\frac1A
\sum_{\ell=1}^L
R_\ell^*
\left(
\sum_{t=1}^T
V_{\ell,t}
\right).
\]
where the last equality regroups the $LT$ steps by chart and stage, with each $(\ell,t)$ appearing exactly once.
For token $n$,
$
\sum_{t=1}^T V_{\ell,t,n}
=
\sum_{t=1}^T
c_{\ell,t,i_{\ell,n,t}}
=
\widehat z_{\ell,n}.
$
Regrouping the sum by chart gives
\begin{equation}
\widehat E=
\frac{1}{A}
\sum_{\ell=1}^L
R_\ell^*\widehat Z^{(\ell)}
=
\Gamma_A^D
\big(
\widehat Z^{(1)},\ldots,\widehat Z^{(L)}
\big).
\end{equation}

So the recursive construction and the final decoder \eqref{eq:orbit-additive-global} are exactly the same object viewed at two different levels:
at construction level we have $V^{(1)},\dots,V^{(LT)}$ while at decoder level we have $C_{\ell,t,\ i_{\ell,n,t}}$.
We now show why $U^{(s)}$ is precisely the local target that a newly allocated stage should approximate in order to reduce the current global reconstruction residual.

\begin{proposition}[Nested sequential shared-codebook residual identity]
\label{prop:nested sequential}
Assume the unit-gain normalization $R_\ell R_\ell^*=I_{\mathcal Z_\ell}$ and canonical $A$-tight gluing. Then at every sequential step $s$,
\begin{equation}
\|G^{(s)}\|_\mu^2
=
\|G^{(s-1)}\|_\mu^2
-
\frac{1}{A^2}\|U^{(s)}\|_\mu^2
+
\frac{1}{A^2}\|U^{(s)}-V^{(s)}\|_\mu^2.
\label{eq:nested-residual-identity}
\end{equation}
 \end{proposition}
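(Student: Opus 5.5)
The plan is to expand the squared norm of the residual update \eqref{eq:nested-residual-update} directly and then match the result to the right-hand side of \eqref{eq:nested-residual-identity} by completing a square. Everything is tokenwise: the weighted table norm $\|\cdot\|_\mu^2=\sum_n\mu_n\|\cdot\|^2$ comes from the weighted inner product $\langle X,Y\rangle_\mu:=\sum_n\mu_n\langle x_n,y_n\rangle$. The pointwise maps $R_{\ell_s}$ and $R_{\ell_s}^*$ act row by row, so adjointness carries over to tables:
\[
\langle X,R_{\ell_s}^*Y\rangle_\mu=\langle R_{\ell_s}X,Y\rangle_\mu .
\]
I will therefore work entirely in $\mathcal H^D$ and $\mathcal Z_{\ell_s}^D$ with this inner product.

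First, writing $G=G^{(s-1)}$, $U=U^{(s)}$ and $V=V^{(s)}$, the update $G^{(s)}=G-\frac1A R_{\ell_s}^*V$ gives
\[
\|G^{(s)}\|_\mu^2=\|G\|_\mu^2-\frac{2}{A}\langle G,R_{\ell_s}^*V\rangle_\mu+\frac{1}{A^2}\|R_{\ell_s}^*V\|_\mu^2 .
\]
Second, I simplify the two new terms. For the cross term, adjointness and the definition \eqref{eq:nested-stage-target} of $U$ give
\[
\langle G,R_{\ell_s}^*V\rangle_\mu=\langle R_{\ell_s}G,V\rangle_\mu=\frac1A\langle U,V\rangle_\mu .
\]
For the quadratic term, the unit-gain normalization $R_{\ell}R_{\ell}^*=I_{\mathcal Z_\ell}$ makes $R_{\ell_s}^*$ an isometry, since $\|R_{\ell_s}^*v\|^2=\langle R_{\ell_s}R_{\ell_s}^*v,v\rangle=\|v\|^2$. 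Applied row by row, this gives $\|R_{\ell_s}^*V\|_\mu=\|V\|_\mu$. Combining, I obtain
\[
\|G^{(s)}\|_\mu^2=\|G\|_\mu^2+\frac{1}{A^2}\big(\|V\|_\mu^2-2\langle U,V\rangle_\mu\big).
\]

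Third, I complete the square using $\|V\|_\mu^2-2\langle U,V\rangle_\mu=\|U-V\|_\mu^2-\|U\|_\mu^2$, which is exactly \eqref{eq:nested-residual-identity}. The role of canonical $A$-tight gluing is only to fix the form of the update with the factor $1/A$ and the target with the factor $A$. The $1/A^2$ scalings then arise consistently from these two factors. No orthogonality of $V$ to anything, and no optimality of the assignment \eqref{eq:guaranteeing U and V are close enough}, is needed, so the identity holds for arbitrary compiled $V^{(s)}$.

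The only step needing care is the isometry of $R_{\ell_s}^*$. One must use $R_\ell R_\ell^*=I$, not $R^*R=A\,\mathrm{Id}$, and note that it is the per-chart normalization that removes any cross-chart interaction, because the update involves only the single chart $\ell_s$. I expect no genuine obstacle beyond keeping the $A$-factors straight.
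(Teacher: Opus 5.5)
Your proof is correct. It reaches the identity by a slightly different decomposition than the paper. You expand $\|G-\frac1A R_{\ell_s}^*V\|_\mu^2$ binomially and complete the square. The paper instead introduces the orthogonal projector $P_{\ell_s}:=R_{\ell_s}^*R_{\ell_s}$ and writes $G^{(s)}=(I-P_{\ell_s})G+R_{\ell_s}^*\bigl(R_{\ell_s}G-\frac1A V\bigr)$ as a sum of two orthogonal pieces. It then applies Pythagoras to both $G^{(s)}$ and $G$ and subtracts.

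The two arguments rest on exactly the same ingredients: adjointness, the isometry of $R_{\ell_s}^*$ coming from unit gain, and $U=AR_{\ell_s}G$. Neither is more general, and neither uses tightness beyond the definitions of the update and the target, as you correctly observe.

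Your expansion is shorter and never needs the projector. The paper's splitting yields the stronger intermediate identity $\|G^{(s)}\|_\mu^2=\|(I-P_{\ell_s})G\|_\mu^2+\frac{1}{A^2}\|U-V\|_\mu^2$. This makes explicit the chart-invisible floor $\|(I-P_{\ell_s})G\|_\mu^2$, which no stage on chart $\ell_s$ can remove. The same form is reused in the proof of Proposition~\ref{prop:gluing aware refinement}.
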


\begin{remark}
Proposition~\ref{prop:nested sequential} turns the global objective of Theorem~\ref{thm:redundant orbit add comprssion decomposition} into a sequential construction rule: once a chart has been selected, the role of the new stage is to make its realized contribution $V^{(s)}$ as close as possible to $U^{(s)}$: the better this approximation, the more of the currently visible global residual is removed. This justifies the realization criterion \eqref{eq:guaranteeing U and V are close enough}, while \eqref{eq:nested-chart-selection} prioritizes the eligible chart with the largest visible residual energy, thereby maximizing the favorable term \(\|U^{(s)}\|_\mu^2/A^2\) subtracted in \eqref{eq:nested-residual-identity}.
\end{remark}

\paragraph{Compiled decoder and storage.}
At this point the storage mechanism is completely visible.
The encoder uses $W_\ell$, $Y_{\ell,k}$, $Z_{\ell,k}$, $J_{\ell,k}$, $\mathcal D_\ell$, and CCDS to construct the shared codewords. These are encoder-side objects: after compilation, the decoder stores only the shared representation map, the chart system, the quantized coordinates of the shared codewords, the scales, the codec metadata, and the token indices.

Let $C_{\rm rep}$ denote the storage of the shared representation map, $C_{\rm chart}$ the chart system, and $C_{\rm meta}$ the remaining fixed codec metadata. If codeword coordinates use $b_\gamma$ bits, each shared codeword carries $b_s$ scale bits, and stage $t$ in chart $\ell$ contains $Q_{\ell,t}$ codewords, then
\begin{equation}
C_{\rm OrBIT}
=C_{\rm rep}+C_{\rm chart}+C_{\rm meta}
+\sum_{\ell=1}^L\sum_{t=1}^TQ_{\ell,t}(M_\ell b_\gamma+b_s)
+D\sum_{\ell=1}^L\sum_{t=1}^T\lceil\log_2Q_{\ell,t}\rceil.
\label{eq:compiled-storage}
\end{equation}
The final term is the entire token-specific cost. The shared codeword cost is $\sum_{\ell,t}Q_{\ell,t}(M_\ell b_\gamma+b_s)$, while the only term scaling with $D$ consists of the $LT$ small discrete indices per token. Thus compression arises from amortizing a small collection of orbit-structured continuous codewords over the vocabulary, not from storing a continuous sparse realization for every token.


\paragraph{Scaffold-aligned post-gluing refinement.}
After initialization, we revisit codewords while holding the remaining decoder state and assignments fixed, as in additive-codebook refinement, but with a different target: OrBIT refines each codeword against the residual of the current globally glued reconstruction while constraining every correction to the learned orbit geometry. Thus the completed reconstruction determines the exact chart target, while the learned orbit geometry controls how accurately that target can be realized by the finite decoder. Proposition~\ref{prop:gluing aware refinement} formalizes this structure-guided refinement step.

For the remainder of this section: we assume the unit-gain normalization \(R_\ell R_\ell^*=I_{Z_\ell}\), canonical \(A\)-tight gluing. Additionally, we fix a chart \(\ell\), stage \(t\), and let $c_{\rm old}:=c_{\ell,t,r_0}\in \mathcal Z_\ell$ be one of the currently stored compiled codewords in \( C_{\ell,t}\subset Z_\ell\). Define
$
I=\{n:i_{\ell,n,t}=r_0\}
$
and assume
$
m_I:=\sum_{n\in I}\mu_n>0.
$
Let \(\widehat E_{\mathrm{old}}\) denote the current compiled global reconstruction, and \(c^\star\in \mathcal Z_\ell\) be the unique unconstrained codeword minimizing the conditional post-gluing distortion when all remaining decoder objects and assignments are held fixed. 
With
$
\bar r_I=\frac1{m_I}\sum_{n\in I}\mu_n(e_n-\widehat e_{\rm old,n}),
$
the target is simply
$
c^*=c_{\rm old}+AR_\ell\bar r_I.
$
See Equation (\ref{eq:c start eq}) below for a proof of this.

Let
$
\widetilde c
:=
\operatorname{CCDS}_{D_\ell}
\!\left(c^\star\right)
$
be its orbit realization and let
$
c_{\mathrm{new}}
:=
 Q_{b_\gamma}
\!\left(\widetilde c\right)
$
be the corresponding compiled codeword. Define
$$
\delta_{\mathrm{scaf}}
:=
\|(I-\Pi_{S_\ell})c^\star\|,
\qquad
\delta_{\mathrm{orb}}
:=
\|\Pi_{S_\ell}c^\star-\widetilde c\|,
\qquad
\delta_{\mathrm{comp}}
:=
\|\widetilde c-c_{\mathrm{new}}\|,
$$
and set
$
\Delta
:=
\sqrt{\delta_{\rm scaf}^2+\delta_{\rm orb}^2}+\delta_{\rm comp}.
$
The next result shows that the decoder geometry gives an exact descent law, while the orbit scaffold gives a sufficient structural certificate for satisfying it.

\begin{proposition} (Scaffold-aligned post-gluing refinement).
\label{prop:gluing aware refinement}
Let \(\widehat E_{\mathrm{new}}\) denote the resulting reconstruction. Then

(i) (Exact finite-decoder statement)
$$
\left\|
E-\widehat E_{\mathrm{new}}
\right\|_\mu^2
-
\left\|
E-\widehat E_{\mathrm{old}}
\right\|_\mu^2
=
\frac{m_I}{A^2}
\left(
\left\|
c_{\mathrm{new}}-c^\star
\right\|^2
-
\left\|
c_{\mathrm{old}}-c^\star
\right\|^2
\right).
$$

(ii) (OrBIT-specific structural certificate) If
$
\Delta
<
\|c_{\mathrm{old}}-c^\star\|,
$
then replacing \(c_{\mathrm{old}}\) by \(c_{\mathrm{new}}\), with the remaining decoder state and assignments fixed, strictly decreases the global reconstruction error.
$$
\|E-\widehat E_{\mathrm{new}}\|_\mu^2
\le
\|E-\widehat E_{\mathrm{old}}\|_\mu^2
-
\frac{m_I}{A^2}
\left(
\|c_{\mathrm{old}}-c^\star\|^2-\Delta^2
\right).
$$
\end{proposition}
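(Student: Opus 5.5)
The plan is to read the replacement as a rank-one perturbation of the global reconstruction and expand the squared $\mu$-norm; part (ii) then follows from part (i) by a triangle inequality. Since only codeword $c_{\rm old}$ in $\mathcal C_{\ell,t}$ changes and all assignments are fixed, the decoded local table $\widehat Z^{(\ell)}$ changes only in rows $n\in I$, each by the same vector $\delta:=c_{\rm new}-c_{\rm old}\in\mathcal Z_\ell$. By the decoder formula $\widehat E=\frac1A\sum_\ell R_\ell^*\widehat Z^{(\ell)}$, we get $\widehat e_{{\rm new},n}=\widehat e_{{\rm old},n}+\frac1A R_\ell^*\delta$ for $n\in I$, and no change for $n\notin I$. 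Writing $r_n:=e_n-\widehat e_{{\rm old},n}$, the difference of squared errors is
\[
\sum_{n\in I}\mu_n\Big(\big\|r_n-\tfrac1A R_\ell^*\delta\big\|^2-\|r_n\|^2\Big)
=-\tfrac{2m_I}{A}\langle \bar r_I,R_\ell^*\delta\rangle+\tfrac{m_I}{A^2}\|R_\ell^*\delta\|^2 .
\]

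Next I use unit-gain normalization: $R_\ell R_\ell^*=I$ makes $R_\ell^*$ an isometry, so $\|R_\ell^*\delta\|=\|\delta\|$, and adjointness gives $\langle\bar r_I,R_\ell^*\delta\rangle=\langle R_\ell\bar r_I,\delta\rangle$. The difference is therefore the quadratic $q(\delta)=\frac{m_I}{A^2}\big(\|\delta\|^2-2\langle AR_\ell\bar r_I,\delta\rangle\big)$. Completing the square with $a:=AR_\ell\bar r_I$ gives $q(\delta)=\frac{m_I}{A^2}\big(\|\delta-a\|^2-\|a\|^2\big)$. This also establishes the claimed form of $c^\star$ (the paper's Equation~(\ref{eq:c start eq})): $q$ is strictly convex in $c_{\rm new}$ and is uniquely minimized at $c_{\rm new}=c_{\rm old}+a=c^\star$. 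Substituting $\delta-a=c_{\rm new}-c^\star$ and $a=c^\star-c_{\rm old}$ gives (i) exactly.

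For (ii), I bound $\|c_{\rm new}-c^\star\|\le\Delta$. Split
\[
c^\star-c_{\rm new}=\big[(I-\Pi_{S_\ell})c^\star+(\Pi_{S_\ell}c^\star-\widetilde c)\big]+(\widetilde c-c_{\rm new}).
\]
The two bracketed terms are orthogonal, because $\Pi_{S_\ell}c^\star$ and $\widetilde c$ both lie in $S_\ell$ (CCDS outputs lie in $S_\ell$). Pythagoras bounds the bracket by $\sqrt{\delta_{\rm scaf}^2+\delta_{\rm orb}^2}$, and the triangle inequality then adds $\delta_{\rm comp}$. Inserting $\|c_{\rm new}-c^\star\|^2\le\Delta^2$ into (i) gives the displayed inequality, and the hypothesis $\Delta<\|c_{\rm old}-c^\star\|$ together with $m_I>0$ makes the decrease strict.

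The main obstacle is bookkeeping rather than analysis. One must check that the compiled reconstruction really changes by $\frac1AR_\ell^*\delta$ on exactly the rows in $I$ and nowhere else. This holds because codewords enter the decoder additively and linearly, and because the refinement is defined with assignments frozen: no reassignment step is interleaved, which would break the rank-one structure. Once that is established, the rest is a completed square plus the orthogonal splitting.
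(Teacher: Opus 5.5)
\textbf{Verdict.} Your proof is correct, and it reaches (i) by a more direct route than the paper.

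\textbf{The paper's route.} It first removes the whole stage $(\ell,t)$ and forms the leave-one-out residual $H_{\ell,t}=E-\widehat E_{-\ell,t}$ together with the chart target $U^\star_{\ell,t}=AR_\ell H_{\ell,t}$. It then proves the stage-level identity $\|E-\widehat E(V)\|_\mu^2=\|(I-P_\ell)H_{\ell,t}\|_\mu^2+A^{-2}\|U^\star_{\ell,t}-V\|_\mu^2$ for an arbitrary replacement table $V$, using the orthogonal splitting $\ker R_\ell\oplus\operatorname{Ran}R_\ell^*$. Only then does it specialize to one codeword: $c^\star$ appears as the weighted centroid of the leave-one-out targets, and (i) follows from the centroid Pythagorean identity.

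\textbf{Your route.} You expand the squared error around the current reconstruction and complete the square in $\delta=c_{\rm new}-c_{\rm old}$. The chart-invisible component cancels automatically through adjointness, so you need neither the orthogonal decomposition nor the leave-one-out objects. You also get the incremental formula $c^\star=c_{\rm old}+AR_\ell\bar r_I$ for free. The paper states this form in the main text but derives it only later, inside the proof of Lemma~\ref{lem:refinement-scaffold-error}, by substituting back into its centroid formula.

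\textbf{What the paper's route buys.} Its stage-level identity holds for any stage table $V$. This ties refinement directly to Proposition~\ref{prop:nested sequential}. It also covers the whole-stage updates with reassignment that the implementation actually performs. Your rank-one computation, by contrast, is specific to a single codeword with frozen assignments.

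\textbf{Part (ii).} Your treatment coincides with the paper's: the orthogonal split uses $\widetilde c\in S_\ell$, and the triangle inequality then absorbs $\delta_{\rm comp}$.
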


\begin{remark}
Proposition \ref{prop:gluing aware refinement} concludes that a post-gluing correction is guaranteed to improve the codec whenever the correction demanded by the original table is captured sufficiently well by the learned scaffold, orbit realization, and finite-bit compilation.
\end{remark}


The next result links refinement directly to the learned orbit geometry: the refinement scaffold error is exactly the portion of the current compiled state and chart-visible global residual that lies outside the scaffold $S_\ell$. When $c_{\rm old}\in S_\ell$, only the residual contribution remains.
\begin{lemma}[Refinement scaffold error as unexplained global residual]
\label{lem:refinement-scaffold-error}
\[
\delta_{\rm scaf}
=
\left\|
(I-\Pi_{S_\ell})c_{\rm old}
+
\frac{A}{m_I}
\sum_{n\in I}
\mu_n
(I-\Pi_{S_\ell})
R_\ell
\bigl(e_n-\widehat e_{\rm old,n}\bigr)
\right\|.
\]
In particular, if \(c_{\rm old}\in S_\ell\), then
\[
\delta_{\mathrm{scaf}}
=
\frac{A}{m_I}
\left\|
(I-\Pi_{S_\ell})
\sum_{n\in I}
\mu_n
R_\ell
\bigl(
e_n-\widehat e_{\mathrm{old},n}
\bigr)
\right\|,
\]
where $
\widehat E_{\mathrm{old}}
=
(\widehat e_{\mathrm{old},n})_{n=1}^D
$
is the current compiled global reconstruction.
\end{lemma}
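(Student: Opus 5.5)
The lemma follows from substituting the explicit formula for the unconstrained target into the definition of $\delta_{\rm scaf}$ and using linearity of $I-\Pi_{S_\ell}$. The plan is to start from $c^\star=c_{\rm old}+AR_\ell\bar r_I$, with $\bar r_I=\frac1{m_I}\sum_{n\in I}\mu_n(e_n-\widehat e_{\rm old,n})$. I will take this identity as established by the stationarity computation referenced as Equation~(\ref{eq:c start eq}). Under the standing assumptions (unit-gain normalization $R_\ell R_\ell^*=I_{\mathcal Z_\ell}$ and canonical $A$-tight gluing), that computation goes as follows.

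\textbf{Recalling the target.} Replacing $c_{\rm old}$ by a free $c$ changes $\widehat e_n$ only for $n\in I$, and only by $\frac1A R_\ell^*(c-c_{\rm old})$. The conditional distortion is therefore
\[
\sum_{n\in I}\mu_n\Bigl\|e_n-\widehat e_{\rm old,n}-\tfrac1A R_\ell^*(c-c_{\rm old})\Bigr\|^2
\]
plus terms not depending on $c$. Setting the gradient to zero and using $R_\ell R_\ell^*=I$ gives $\frac{m_I}{A^2}(c-c_{\rm old})=\frac1A R_\ell\sum_{n\in I}\mu_n(e_n-\widehat e_{\rm old,n})$. This yields exactly the stated $c^\star$. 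The objective is strictly convex in $c$ because $R_\ell^*$ is an isometry and $m_I>0$, so $c^\star$ is unique. I will cite this rather than redo it.

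\textbf{Main step.} Apply the bounded linear operator $I-\Pi_{S_\ell}$ to $c^\star$. Since $R_\ell$ and $I-\Pi_{S_\ell}$ are linear and the sum over $I$ is finite, the operator commutes with the scalar $A/m_I$ and with the weighted sum:
\[
(I-\Pi_{S_\ell})c^\star=(I-\Pi_{S_\ell})c_{\rm old}+\frac{A}{m_I}\sum_{n\in I}\mu_n(I-\Pi_{S_\ell})R_\ell(e_n-\widehat e_{\rm old,n}).
\]
Taking norms gives the first displayed formula, by the definition $\delta_{\rm scaf}=\|(I-\Pi_{S_\ell})c^\star\|$.

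\textbf{Special case.} If $c_{\rm old}\in S_\ell$, then $(I-\Pi_{S_\ell})c_{\rm old}=0$. Pulling the positive scalar $A/m_I$ out of the norm and moving $(I-\Pi_{S_\ell})R_\ell$ outside the sum gives the second formula. The remark that compiled codewords need not lie in $S_\ell$ (coordinate quantization can push them off the scaffold) explains why the general formula keeps the $c_{\rm old}$ term.

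The only real obstacle is making sure the expression for $c^\star$ is correct, in particular the factor $A$, which relies on $R_\ell R_\ell^*=I$ and the $1/A$ normalization in $\Gamma_A$. Once that is accepted, the lemma is pure linearity.
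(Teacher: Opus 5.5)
Your proposal is correct and follows the paper's proof: obtain $c^\star=c_{\rm old}+\frac{A}{m_I}\sum_{n\in I}\mu_nR_\ell(e_n-\widehat e_{\rm old,n})$, apply the linear map $I-\Pi_{S_\ell}$, take norms, and drop the $c_{\rm old}$ term when $c_{\rm old}\in S_\ell$. The only difference is how you get $c^\star$: you use direct stationarity around $c_{\rm old}$, whereas the paper starts from the leave-one-stage-out centroid formula \eqref{eq:explicit-refinement-target-global} and substitutes $\widehat e_{\rm old,n}=\widehat e_{-\ell,t,n}+\frac1A R_\ell^*c_{\rm old}$; keep that computation in, because \eqref{eq:c start eq} is itself proved inside this lemma and citing it would be circular.
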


Under cross-chart scaffold compatibility, the aggregate stagewise refinement scaffold error is controlled by the original table's scaffold projection error $\varepsilon_{\mathrm{proj},\ell}$ from~(\ref{eq:scaffold-accept}).
\begin{corollary}[Collapse under cross-chart scaffold compatibility]
\label{cor:refinement-scaffold-compatibility}
Under the standing unit-gain and canonical \(A\)-tight synthesis assumptions, fix a stage \((\ell,t)\) and let
$
I_r:=\{n:i_{\ell,n,t}=r\},
$ with 
$
m_{I_r}:=\sum_{n\in I_r}\mu_n>0
$
denote its nonempty assignment cells. Assume
$
c_{\ell,t,r}\in S_\ell
\quad\text{for every }r\text{ with }m_{I_r}>0,
R_\ell R_j^\ast S_j\subseteq S_\ell
\quad\text{for every }1\le j\le L,
$
and
$
\widehat z_{j,n}\in S_j
\quad\text{for every }1\le j\le L,\;1\le n\le D.
$
Then
$
\sum_{r:\,m_{I_r}>0}
\frac{m_{I_r}}{A^2}
\delta_{{\rm scaf},r}^2
\le
\varepsilon_{{\rm proj},\ell}^2.
$
\end{corollary}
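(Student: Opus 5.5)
The plan is to reduce each cell's scaffold error to a cell average of the original table's out-of-scaffold component, and then sum over the disjoint cells with a convexity bound. First I specialize Lemma~\ref{lem:refinement-scaffold-error} to each nonempty cell $I_r$ of the fixed stage $(\ell,t)$, taking $c_{\rm old}=c_{\ell,t,r}$. The hypothesis $c_{\ell,t,r}\in S_\ell$ lets me use the ``in particular'' form of the lemma:
\[
\delta_{{\rm scaf},r}=\frac{A}{m_{I_r}}\Bigl\|(I-\Pi_{S_\ell})\sum_{n\in I_r}\mu_n R_\ell\bigl(e_n-\widehat e_{{\rm old},n}\bigr)\Bigr\|.
\]
Hence $\frac{m_{I_r}}{A^2}\delta_{{\rm scaf},r}^2=\frac{1}{m_{I_r}}\bigl\|\sum_{n\in I_r}\mu_n(I-\Pi_{S_\ell})R_\ell(e_n-\widehat e_{{\rm old},n})\bigr\|^2$.

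The key structural step is to show that the reconstruction term drops out after projecting off the scaffold. By canonical $A$-tight gluing, $\widehat e_{{\rm old},n}=\frac1A\sum_{j=1}^L R_j^*\widehat z_{j,n}$. Applying $R_\ell$ gives
\[
R_\ell\widehat e_{{\rm old},n}=\frac1A\sum_{j=1}^L R_\ell R_j^*\widehat z_{j,n}.
\]
Each summand lies in $S_\ell$, because $\widehat z_{j,n}\in S_j$ and $R_\ell R_j^*S_j\subseteq S_\ell$. Since $S_\ell$ is a subspace, the whole sum lies in $S_\ell$. Therefore $(I-\Pi_{S_\ell})R_\ell\widehat e_{{\rm old},n}=0$, and so
\[
(I-\Pi_{S_\ell})R_\ell(e_n-\widehat e_{{\rm old},n})=(I-\Pi_{S_\ell})z^{(\ell)}_n ,\qquad z^{(\ell)}_n:=R_\ell e_n .
\]
This is the step I expect to carry the real content. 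The compatibility hypotheses are exactly what make the current compiled state invisible to the out-of-scaffold projection, so only the original table's projection error survives. The care needed is to use the compiled reconstruction consistently, since the hypothesis is stated for the decoded local representations $\widehat z_{j,n}$.

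Finally I apply Cauchy--Schwarz (equivalently, Jensen with the probability weights $\mu_n/m_{I_r}$) to the weighted average:
\[
\frac{1}{m_{I_r}}\Bigl\|\sum_{n\in I_r}\mu_n(I-\Pi_{S_\ell})z^{(\ell)}_n\Bigr\|^2\le\sum_{n\in I_r}\mu_n\bigl\|(I-\Pi_{S_\ell})z^{(\ell)}_n\bigr\|^2 .
\]
The cells $I_r$ are disjoint, since each token has exactly one index at stage $(\ell,t)$. Summing over the nonempty cells is therefore bounded by the sum over all $n$, which is $\|(I-\Pi_{S_\ell})Z^{(\ell)}\|_\mu^2=\varepsilon_{{\rm proj},\ell}^2$ as defined in~(\ref{eq:scaffold-accept}). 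Because every $\mu_n>0$, the nonempty cells are precisely those with $m_{I_r}>0$. In fact they cover all tokens, so the final inequality is an equality before the Jensen step. This yields the corollary.
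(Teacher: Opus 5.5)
Your proposal is correct and follows the paper's proof essentially step for step. You use the gluing identity together with the compatibility hypotheses to kill $(I-\Pi_{S_\ell})R_\ell\widehat e_{{\rm old},n}$. You use the in-particular form of Lemma~\ref{lem:refinement-scaffold-error} (via $c_{\ell,t,r}\in S_\ell$) to reduce each $\delta_{{\rm scaf},r}$ to $A$ times the weighted cell mean of $(I-\Pi_{S_\ell})R_\ell e_n$. Weighted Cauchy--Schwarz, summed over the cells that partition the tokens, then gives $\varepsilon_{{\rm proj},\ell}^2$.
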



\section{EXPERIMENTS}
\label{sec:experiments}

We design the experiments around five questions that correspond to the main components of OrBIT:
(i) whether the complete compiled codec gives a competitive rate--distortion tradeoff, and in particular whether imposing learned orbit structure remains advantageous relative to unconstrained additive coding; (ii) whether the learned orbit scaffold itself captures the embedding geometry better than an unstructured subspace of identical rank; (iii) whether post-gluing refinement improves the actual global reconstruction at fixed rate; (iv) whether redundancy produces the error-cancellation mechanism predicted by the theory; and (v) whether the orbit construction can genuinely be discarded after compilation. Experiments 1--5 address these questions below.

\paragraph{Experimental protocol.} 
\label{sec:experimental-protocol}
We evaluate GPT-2, Llama-2-7B, Mistral-7B-v0.1, and Mistral-7B-Instruct-v0.3. For each model we extract $12{,}000$ token embeddings and compress a fixed $D=10{,}000$-token table. GPT-2 has original dimension $768$ and is mapped to $d=64$; the three $7$B models have original dimension $4096$ and are mapped to $d=192$. The corresponding PCA explained-variance fractions are $0.2624$, $0.2200$, $0.2133$, and $0.2143$. These fractions are computed relative to the centered table. Importantly, the PCA map is not free: its complete storage cost is included in every reported rate. The uncompressed reference uses $16$ bits per coordinate, giving $12{,}288$ bits/token for GPT-2 and $65{,}536$ bits/token for the $7$B tables.

Our primary distortion measure is the relative reconstruction error
$
\frac{\|E-\widehat E\|_\mu}{\|E\|_\mu}
$
in the \emph{original embedding space}, with uniform $\mu$. Thus the reported end-to-end error includes both the fixed representation truncation and the subsequent OrBIT coding error. We additionally monitor cosine similarity and nearest-neighbor recall, but use relative error for all rate--distortion comparisons.

\paragraph{OrBIT configuration and storage.}
The standard experiments use $L=4$ unit-gain $A=2$ tight charts, with $M_\ell=32$ for GPT-2 and $M_\ell=96$ for the $7$B tables. The orbit scaffolds are constructed from $0.75\%$ of the GPT-2 table and $1.2\%$ of the $7$B tables; scaffold acceptance is always evaluated on the complete local table with $\tau_{\rm proj}=10^{-3}$. Unless stated otherwise, we use $T=7$ additive stages and $Q_{\ell,t}=32$ shared codewords per chart-stage pair. Shared codeword coordinates are quantized to $8$ bits with one $16$-bit scale per codeword, so every token-specific chart-stage assignment costs only $5$ bits. All rates include $C_{\rm rep}$, the chart-system description, every shared quantized codeword and scale, codec metadata, and all token indices. Full implementation details are given in Appendix~\ref{app:implementation}.

We compare with PQ, OPQ, AQ, scalar quantization (SQ), and truncated SVD. In addition to their standard storage conventions, we re-price PQ, OPQ, and AQ using the same $8$-bit-coordinate plus $16$-bit-scale convention as OrBIT; we denote these controls by PQ-q8, OPQ-q8, and AQ-q8. This prevents an apparent gain from being explained merely by cheaper storage of OrBIT's shared codewords.

\paragraph{Rate--distortion across four embedding tables.}
Varying \(T=1,\ldots,7\) gives a monotone OrBIT rate--distortion curve on all four tables. At \(T=7\), OrBIT compresses GPT-2 by \(37.9\times\) and each 7B embedding table by \(23.9\times\). Table~1 compares this endpoint with representative high-rate q8 baselines; the complete curves, including the standard-precision baselines, are reported in the accompanying rate--distortion Figure~\ref{fig:rate_distortion} in Appendix \ref{app:experimental-details}.

\begin{table*}[t]
\centering
\small
\setlength{\tabcolsep}{5pt}
\caption{Representative end-to-end rate--distortion results. Each entry is bits/token\,/\,relative error in the original embedding space. The PQ-q8 and OPQ-q8 entries use $m=16,b=8$; AQ-q8 uses $m=8,b=8$. All shared objects, including the PCA representation map, are charged to representative operating points with matched precision. Lower error and lower rate are better.}
\label{tab:main-rd}
\begin{tabular}{lcccc}
\hline
Embedding table & OrBIT, $T=7$ & PQ-q8 & OPQ-q8 & AQ-q8 \\
\hline
GPT-2 &
$324.15/0.7449$ &
$307.40/0.7467$ &
$313.96/0.7465$ &
$331.88/0.7582$ \\
Llama-2-7B &
$2739.97/0.9163$ &
$2703.56/0.9211$ &
$2762.55/0.9192$ &
$2911.54/0.9278$ \\
Mistral-7B &
$2739.97/0.9151$ &
$2703.56/0.9189$ &
$2762.55/0.9170$ &
$2911.54/0.9228$ \\
Mistral-7B-Instruct &
$2739.97/0.9205$ &
$2703.56/0.9176$ &
$2762.55/0.9157$ &
$2911.54/0.9205$ \\
\hline
\end{tabular}
\end{table*}

AQ is the closest comparator in decoder form, whereas PQ and OPQ test product-code representations; their structural differences are discussed in Related Work. All methods share the input representation and storage accounting, with matched shared-codeword precision in the q8 controls. At the displayed operating points, Table \ref{tab:main-rd} shows that OrBIT improves both rate and distortion over AQ-q8 on GPT-2, Llama-2-7B, and Mistral-7B, and approximately matches its distortion at lower rate on Mistral-7B-Instruct. This suggests the orbit constraint therefore does not merely reduce the freedom of the additive codebooks: across these tables it yields a more rate-efficient structured additive representation than the unrestricted AQ comparator. On the other side, OrBIT has lower distortion than the displayed PQ-q8 and OPQ-q8 points on the first three tables; both product quantizers attain lower distortion on Mistral-7B-Instruct.
 This second comparison is particularly demanding because product quantization builds strong compression efficiency directly into the factorization itself, so it asks whether the learned overlapping orbit geometry can compete with an efficient but prescribed nonoverlapping product geometry. The empirical claim is therefore not universal dominance, but that shared codewords constrained by learned orbit geometry can outperform unconstrained additive coding and remain competitive with mature product-quantization baselines while providing an explicitly analyzable local-to-global compression mechanism. The SVD baseline is substantially less rate-efficient in this regime, while SQ reaches lower distortion only at markedly larger rates. These are complete-codec comparisons at nearby rates, not component ablations. Full rate–distortion curves are in Appendix \ref{app:exp1}.

\paragraph{Does the learned orbit geometry itself matter?}
\label{sec:learned-geometry-ablation}
The previous comparison cannot by itself establish that the \emph{learned} orbit geometry is useful. We therefore isolate scaffold discovery directly, before additive quantization or sparse realization can confound the comparison. In every chart we construct an orbit scaffold of rank \(r=0.75M_\ell\) from a small anchor set and compare it with \(20\) independently drawn, data-independent Gaussian subspaces of exactly the same rank. Both arms are evaluated on the complete local table \(Z^{(\ell)}\) through the relative projection error
$
\rho_\ell(S):=
\frac{\|(I-\Pi_S)Z^{(\ell)}\|_\mu}
{\|Z^{(\ell)}\|_\mu}.
$
Rank, ambient dimension, and evaluation data are therefore identical; the only advantage available to the orbit arm is that its geometry has been discovered from the data.

The separation is strikingly stable across all four embedding tables. Averaged over the four charts, the orbit versus Gaussian residuals are \(0.4775\) versus \(0.5011\) on GPT-2, \(0.4762\) versus \(0.4994\) on Llama-2-7B, \(0.4741\) versus \(0.4993\) on Mistral-7B, and \(0.4731\) versus \(0.4993\) on Mistral-7B-Instruct. These are reductions of \(4.7\%\), \(4.6\%\), \(5.1\%\), and \(5.3\%\) in relative projection error, or approximately \(9.2\%\), \(9.1\%\), \(9.9\%\), and \(10.2\%\) in squared unexplained projection energy. More strongly, in every one of the \(16\) model--chart comparisons, the orbit scaffold outperforms \emph{every one} of the \(20\) corresponding Gaussian draws. Since the rank is held fixed, this ablation shows that the gain is not explained by subspace dimension alone: the orbit construction discovers directions systematically better aligned with the full embedding table than an unstructured subspace of identical capacity.

\paragraph{Mechanism ablations.}
\label{sec:mechanism-ablations}

\paragraph{Post-gluing refinement improves the actual codec at fixed rate.}

We first disable and enable the gluing-aware refinement while keeping the complete decoder architecture, number of codewords, precision, and token indices budget fixed. Table~\ref{tab:gluing-ablation} reports the weighted reconstruction error in the PCA space immediately before and after refinement. The error decreases on every model, by $2.8$--$4.1\%$, without adding a single decoder bit. Of 280 proposed chart-stage updates, 265 are accepted for GPT-2 (15 rejected) and 280 for each 7B run (0 rejected). This is the empirical counterpart of the post-gluing refinement analysis: the completed global reconstruction provides a useful target for revisiting codewords that were originally fitted sequentially.

\begin{table}[t]
\centering
\small
\setlength{\tabcolsep}{4pt}
\caption{Gluing-aware refinement. Rate is unchanged; lower absolute weighted PCA-space error is better.}
\label{tab:gluing-ablation}
\begin{tabular}{lccc}
\hline
Model & Before & After & Reduction \\
\hline
GPT-2 & $0.4750$ & $0.4555$ & $4.11\%$ \\
Llama-2-7B & $0.3050$ & $0.2966$ & $2.76\%$ \\
Mistral-7B & $0.04777$ & $0.04643$ & $2.82\%$ \\
Mistral-Inst. & $0.04968$ & $0.04830$ & $2.77\%$ \\
\hline
\end{tabular}
\end{table}

\paragraph{What does redundancy actually do?}
To isolate the mechanism in Theorem~\ref{thm:redundant orbit add comprssion decomposition} from the particular standard chart construction, this ablation alone uses a richer independent-view family. We compare the nonredundant product boundary $A=1$ with $A=2$ and $A=3$, generated from one, two, and three independent orthogonal decompositions. We match total chart-stage capacity by fixing $LT=48$:
\[
(A,L,T)=(1,2,24),\qquad(2,4,12),\qquad(3,6,8).
\]
Consequently, all three arms have exactly the same model-specific rate. Table~\ref{tab:redundancy-ablation} reports the post-gluing PCA-space error and the fraction $\varepsilon_{\rm inc}^2/\varepsilon_{\rm loc}^2$ of local squared error removed by canonical gluing.

\begin{table*}[t]
\centering
\small
\setlength{\tabcolsep}{5pt}
\caption{Matched redundancy ablation. The total chart-stage budget $LT=48$ is fixed. ``Cancelled'' denotes $\varepsilon_{\rm inc}^2/\varepsilon_{\rm loc}^2$. GPT-2 uses $441.56$ bits/token in every arm; the $7$B models use $2890.14$ bits/token.}
\label{tab:redundancy-ablation}
\begin{tabular}{lccccc}
\hline
& \multicolumn{3}{c}{Post-gluing error} & \multicolumn{2}{c}{Cancelled local error} \\
Model & $A=1$ & $A=2$ & $A=3$ & $A=2$ & $A=3$ \\
\hline
GPT-2 & $0.18893$ & $0.18903$ & $0.18897$ & $98.85\%$ & $99.43\%$ \\
Llama-2-7B & $0.22596$ & $0.22378$ & $0.22057$ & $81.27\%$ & $89.85\%$ \\
Mistral-7B & $0.03577$ & $0.03492$ & $0.03430$ & $82.41\%$ & $90.66\%$ \\
Mistral-Inst. & $0.03968$ & $0.03813$ & $0.03814$ & $77.48\%$ & $87.04\%$ \\
\hline
\end{tabular}
\end{table*}

The nonredundant $A=1$ control has essentially no inconsistency component, as predicted. Once redundant views are introduced, canonical gluing removes a large component of the accumulated local error. This translates into a $2.39\%$ reduction in global PCA-space error on Llama-2-7B, a $4.09\%$ reduction on Mistral-7B, and a $3.91\%$ reduction on Mistral-Instruct at the best redundant setting. GPT-2 is informative in the opposite direction: despite cancellation above $98\%$, its final error is essentially unchanged because the local error itself grows under the matched-capacity constraint. This is precisely the tradeoff exposed by Theorem~\ref{thm:redundant orbit add comprssion decomposition}: redundancy does not guarantee improvement merely through the factor $A^{-1/2}$; its benefit depends jointly on the local approximation error and the inconsistency component removed by gluing. Numerically, the theoretical identity matches the measured global error to approximately $10^{-9}$ or better in every redundancy run.

\paragraph{Compilation removes construction state without changing reconstruction.}
Finally, we test whether the storage law in Section~\ref{sec:redundant orbit based chart} corresponds to a genuine deployable decoder. We reconstruct each table using only the objects charged in $C_{\rm OrBIT}$: the representation map, chart-system information, shared quantized codewords and scales, metadata, and token indices. The decoder is given no $W_\ell$, interpolants $J_{\ell,k}$, orbit dictionaries $\mathcal D_\ell$, anchors, scaffolds, or CCDS state. Across all four models, the decoder-only chart-space, PCA-space, and original-space reconstruction gaps are all $0$ at the reported numerical precision. Thus compilation is reconstruction-preserving in these runs.

The storage effect is also nontrivial. On GPT-2, retaining the explicit orbit-construction state would require $355.55$ bits/token, whereas the compiled decoder requires $324.15$ bits/token, an $8.83\%$ reduction. On each $7$B table the corresponding rates are $2889.13$ and $2739.97$ bits/token, a $5.16\%$ reduction. The orbit machinery therefore determines \emph{how} the shared continuous objects are constructed, but it is not hidden state required to decode them.


Taken together, these experiments separate the contributions of the framework rather than relying only on the final rate--distortion curve: OrBIT improves over the closest unconstrained additive comparator AQ at the codec level, while the rank-matched scaffold diagnostic shows directly that the learned orbit geometry captures more of the full embedding table than unstructured geometry of identical dimension; post-gluing refinement reduces the true reconstruction objective without increasing rate; redundant synthesis exhibits the cancellation term predicted by the theory, with a data-dependent net benefit; and compilation eliminates the orbit-construction state while reproducing exactly the transmitted decoder output. The resulting evidence supports the intended role of OrBIT as a structure-guided codec: the same globally glued reconstruction error determines the refinement targets, while the scaffold identifies the correctable directions and bounds the loss from geometric restriction.
Complete experimental configurations, full mechanism ablations, anchor-budget diagnostics, and three-seed robustness statistics are provided in Appendix~\ref{app:experimental-details}.


\section*{Reproducibility Statement}
Complete proofs of the theoretical results are provided in the appendix.
The construction of the chart system, orbit scaffolds, shared codebooks,
nested allocation, refinement procedure, compilation scheme, and
experimental protocol are described in the main text and appendix.
Hyperparameters and storage accounting are reported explicitly.


\section*{Ethics Statement}
This work develops theoretical and computational methods for neural embedding compression and does not involve human subjects or the collection of personal or sensitive data. The experiments use existing pretrained language models and are intended to study compression mechanisms rather than downstream deployment or decision-making. We are not aware of specific ethical risks introduced by the proposed methodology beyond those already associated with the underlying models on which it may be applied.

\section*{AI Use Statement}
Generative AI tools were used to assist with manuscript
editing and presentation, including improving clarity and organization.

 \bibliography{iclr2027_conference}
\bibliographystyle{iclr2027_conference}


\appendix

\section{Normalization and interpretation of the tight-frame constant}
\label{app:normalization redundancy}
The general local-to-global theory does not require normalized chart maps. However, the tight-frame constant $A$ measures genuine redundancy only after fixing their scale, since replacing every $R_\ell$ by $cR_\ell$ changes $A$ to $c^2A$. We therefore use the unit-gain normalization $R_\ell R_\ell^*=I_{\mathcal Z_\ell}$. Then $R_\ell^*R_\ell$ is the orthogonal projector onto the subspace observed by chart $\ell$, and for an $A$-tight system,
\begin{equation}
\sum_{\ell=1}^L R_\ell^*R_\ell=A I_H,
\qquad
A=\frac{\sum_{\ell=1}^L\dim\mathcal Z_\ell}{\dim\mathcal H}.
\end{equation}
Thus $A>1$ reflects genuine overlap of the normalized charts, and the canonical gluing map has norm $\|\Gamma_A\|=A^{-1/2}<1$. By contrast, an orthogonal product decomposition is nonredundant and its exact concatenation map has norm one. This normalization is therefore unnecessary for Theorem~\ref{thm:redundant orbit add comprssion decomposition} itself, but is essential when interpreting $A$ as redundancy and $A^{-1/2}$ as contractive synthesis.


\section{FINITE-DIMENSIONAL ENCODER AND IMPLEMENTATION DETAILS}
\label{app:implementation}

This appendix gives the orbit ingredients required to implement OrBIT without consulting the general learning framework. Compression uses only finite-table identity realization: local embeddings serve as both inputs and targets, and orbit dictionaries realize shared codeword prototypes. No predictor for unseen inputs is learned. The wrapped shift, block interpolation, and CCDS mechanism are inherited from \citet{puig2026orbit}; shared codebooks, redundant synthesis, residual-directed scheduling, and finite-decoder refinement define their use here.


In a chart, write the anchor vectors as \(x_j\). Choose orbit depths for which the columns of \(Z_k=[W^{L_j}x_j]\) are independent, and let \(Y_k=[x_j]\). Then \(J_k=Y_kZ_k^\dagger\) satisfies \(J_kW^{L_j}x_j=x_j\). The atoms \(J_kW^m x_j\), \(0\le m<M\), form a finite dictionary \(\mathcal D\). Since \(W^M=(\prod_iw_i)I\), deeper iterates add no new directions. Its full span equals the anchor span; changing the shift changes the dictionary representation inside that span.

 For a prototype \(v\), realization first targets \(p=\Pi_Sv\) and then constructs \(\widetilde c\in S\) using dictionary atoms. Orthogonality separates the two errors:
 \[
 \|v-\widetilde c\|^2=\|(I-\Pi_S)v\|^2+\|\Pi_Sv-\widetilde c\|^2.
 \]
 The first is a coverage limitation; the second depends on the realization rule and budget. The final codeword is \(c=Q_{b_\gamma}(\widetilde c)\), and all assignments and acceptance tests use this compiled vector.

\paragraph{CCDS mechanism.}Let \(\widehat p_0=0\), \(r_0=p\), and \(V_0=\{0\}\). At realization step \(k\), select a dictionary atom outside \(V_{k-1}\), orthogonalize it to obtain a unit innovation \(u_k\), and put \(V_k=V_{k-1}\oplus\operatorname{span}\{u_k\}\). Set
 \[
 a_k=(I-\Pi_{V_{k-1}})r_{k-1},\quad
 \alpha_k=\langle a_k,u_k\rangle,\quad
 d_{k,\min}^2=\|a_k\|^2-\alpha_k^2.
 \]
 For a feasible deviation \(d_k\ge d_{k,\min}\), choose
 \[
 \lambda_k=\alpha_k\pm\sqrt{d_k^2-d_{k,\min}^2},\qquad
 \widehat p_k=\widehat p_{k-1}+\lambda_ku_k.
 \]
 With \(r_k=p-\widehat p_k\), this enforces \(\operatorname{dist}(r_k,V_{k-1})=d_k\). When every step uses \(d_k=d_{k,\min}\), it recovers orthogonal projection along the selected independent atoms; interior choices control a different realization path.

 In the reported encoder we use boundary full CCDS: \(d_k=d_{k,\min}\), hence \(\lambda_k=\alpha_k\). Dictionary atoms are unit-normalized; zero atoms are discarded, and a candidate must contribute an innovation of norm greater than \(10^{-7}\). At each step the unselected admissible atom maximizing \(|\langle r_{k-1},q_j\rangle|/\|(I-\Pi_{V_{k-1}})q_j\|\) is selected and reorthogonalized. The solver stops at \(\|r_k\|\le10^{-2}\|p\|\), at support 32 (GPT-2) or 96 (7B), or when no further independent innovation is available; the resulting selected-support least-squares realization is then used.

 \paragraph{Shared representation and standard chart system.}
The embedding table is first centered and mapped by the shared PCA representation used throughout the experiments. We use $d=64$ for GPT-2 and $d=192$ for the $4096$-dimensional Llama and Mistral embedding tables. The corresponding chart dimensions are $M_\ell=M=32$ and $M_\ell=M=96$, respectively. The PCA mean and loading matrix are part of $C_{\rm rep}$ and are included in all reported rates. From a seeded 12,000-row vocabulary sample, a seeded permutation selects the first \(D=10{,}000\) rows as the table compressed in the reported experiments; the remaining 2,000 rows are unused. PCA is fitted only on these \(10{,}000\) rows after subtracting their empirical mean, and decoding to the original space uses the stored loading matrix followed by addition of that mean.

Except in the redundancy ablation described below, the experiments use $L=4$ charts and $A=2$. A single stored seed generates a Gaussian matrix whose orthogonal factor is denoted by $Q\in\mathbb R^{d\times d}$. Concretely, the run seed initializes a CPU generator, a \(d\times d\) standard Gaussian matrix is drawn in float64, its QR \(Q\)-factor is cast to float32, and the chart windows are then fixed deterministically by their cyclic offsets. For each chart $\ell$, a coordinate selector $P_\ell$ extracts a cyclic window of $M=d/2$ rows, with the windows placed so that every ambient coordinate is covered twice, and
$$
R_\ell:=P_\ell Q:\mathcal H\to\mathcal Z_\ell.
$$
The resulting system is verified numerically to satisfy
$$
R_\ell R_\ell^*=I_{\mathcal Z_\ell},
\qquad
\sum_{\ell=1}^L R_\ell^*R_\ell=2I_{\mathcal H}.
$$
Thus the standard experimental configuration uses canonical gluing $\Gamma_A=A^{-1}R^*$ with $A=2$.

\paragraph{Anchor candidates and block construction.}
For each chart $\ell$, a candidate anchor set $I_\ell$ contains a prescribed fraction $p$ of the $D$ table rows. The encoder evaluates up to five generated candidate anchor sets, until the first accepted one reproducibly. Starting from a seeded initial token, anchors are added by farthest-point sampling in $\mathcal Z_\ell$: each new anchor maximizes its minimum Euclidean distance from the anchors already selected. Candidate sets are kept disjoint whenever enough unused rows remain.

Each candidate $I_\ell$ is partitioned into blocks
$
I_\ell=I_{\ell,1}\cup\cdots\cup I_{\ell,K_\ell}.
$
Starting from a seeded ordering, the encoder searches over depths
$
L_{\ell,j}\in\{0,\ldots,M_\ell-1\}
$
and admits an anchor to the current block only when its orbit representative is numerically independent of the representatives already present. If $v$ is the candidate representative and $P$ is the orthogonal projector onto the current span, the numerical independence test is
$$
\frac{\|(I-P)v\|}{\|v\|}>0.1.
$$
The partition is refined until every block contains at most \(\lceil M_\ell/3\rceil\) anchors.
This refinement is purely an encoder-side numerical procedure; the resulting partition is used only to construct the local orbit scaffold.

\paragraph{Learning the wrapped shift and fixing the interpolants.}
For a candidate partition, the wrapped-shift weights are parameterized as
$$
w_{\ell,j}
=
\exp\left(
s_{\ell,j}-\overline{s}_\ell
+\frac{1}{M_\ell}\log\lambda
\right),
\qquad
\overline{s}_\ell
=
\frac{1}{M_\ell}\sum_{j=0}^{M_\ell-1}s_{\ell,j},
\qquad
\lambda=0.5.
$$
Hence
$
\prod_{j=0}^{M_\ell-1}w_{\ell,j}=\lambda
$
throughout optimization. During shift learning the orbit atoms are row-normalized. Writing \(G=A^\top A\), the differentiable ridge surrogate is \(P_{\rm rid}=(G+\lambda_{\rm rid}I)^{-1}G\), with \(\lambda_{\rm rid}=10^{-4}\max_i G_{ii}\), and the encoder minimizes the weighted mean \(\sum_n\mu_n\|z_n-z_nP_{\rm rid}\|^2/\sum_n\mu_n\). Adam is run for 120 steps at learning rate \(0.05\); when the local table exceeds 2048 rows, a seeded 2048-row subsample is used. The final \(J_{\ell,k}\) are subsequently recomputed as \(Y_{\ell,k}Z_{\ell,k}^\dagger\). Because exact interpolation fixes \(S_\ell=\operatorname{span}\mathcal D_\ell\) to the anchor-target span, this optimization does not change the exact scaffold subspace; it changes the orbit dictionary geometry and conditioning, and hence finite-budget realization within that span.

After $W_\ell$ has been learned, the depths $L_{\ell,j}$ are recomputed greedily so that the representatives
\[
z_j^{(\ell)}
=
W_\ell^{L_{\ell,j}}R_\ell e_j,
\qquad
j\in I_{\ell,k},
\]
remain numerically independent inside every block. Writing these representatives and their targets as the columns of $Z_{\ell,k}$ and $Y_{\ell,k}$, respectively, the implementation forms
$
J_{\ell,k}=Y_{\ell,k}Z_{\ell,k}^{\dagger}
$
and then constructs $\mathcal D_\ell$ and $S_\ell=\operatorname{span}\mathcal D_\ell$ exactly as defined in Section~2. The block interpolation residuals are checked numerically, with tolerance $10^{-6}$.

Crucially, scaffold acceptance is not performed on the anchor set. For every candidate, the encoder evaluates the projection error on the complete local table as indicated in (\ref{eq:scaffold-accept}).
The default threshold is
$
\tau_{\rm proj}=10^{-3}.
$
Candidates are evaluated in seeded order and the first scaffold satisfying the full-table criterion is retained. If none satisfies the threshold, the run is rejected rather than relaxing $\tau_{\rm proj}$.

\paragraph{Orbit realization and finite-bit compilation.}
Once $\mathcal D_\ell$ and $S_\ell$ have been fixed, every shared prototype $p\in\mathcal Z_\ell$ is realized through the same encoder-side map. The target component represented by the scaffold is passed to CCDS over $\mathcal D_\ell$, producing
$$
\widetilde c
=
\operatorname{CCDS}_{\mathcal D_\ell}(p)
\in S_\ell.
$$
The support cap is $32$ for the $M=32$ charts and $96$ for the $M=96$ charts.
Compilation uses a symmetric per-codeword quantizer. For \(\widetilde c\), let \(s\) be \(\|\widetilde c\|_\infty\) clamped to the smallest positive fp16 normal value and stored in fp16, and let \(L=2^{b_\gamma-1}-1\). The compiled vector is
\[
c=\frac{s}{L}\operatorname{clip}\!\left(\operatorname{round}\!\left(\frac{L\widetilde c}{s}\right),-L,L\right).
\]We use \(b_\gamma=8\), so \(L=127\), and store one 16-bit scale per codeword. A zero codeword remains identically zero. All subsequent assignments and distortion calculations use this compiled vector.

All subsequent assignments, residual updates, refinement steps, and distortion measurements use the resulting compiled codeword $c$, not the unquantized orbit realization $\widetilde c$. Thus the reported distortion always corresponds to the finite decoder state that is actually charged in the storage ledger.

The nested shared-codebook construction in the main experiments uses
$$
T=7,
\qquad
Q_{\ell,t}=Q=32,
\qquad
\left\lceil\log_2Q\right\rceil=5.
$$
Each newly allocated stage is initialized by 20 seeded k-means iterations on its current global-residual target, followed by within-stage centroid, realization, compilation, and reassignment sweeps, and is then committed to the decoder. Post-gluing refinement performs 10 sweeps over chart-stage pairs: for each stage its codewords are recomputed from the current assignments, realized and compiled, the assignments are recomputed against the compiled codewords, and the whole stage update is retained only when the current global reconstruction error does not increase (tolerance \(10^{-12}\)); otherwise the previous stage state is kept.

\paragraph{Fixed numerical settings.}
Unless a specific ablation changes one of them explicitly, the reported experiments use the same numerical settings:
$$
\lambda=0.5,
\qquad
\tau_{\rm proj}=10^{-3},
\qquad
Q=32,
\qquad
b_\gamma=8,
\qquad
b_s=16,
\qquad
T=7,
$$
with $120$ Adam steps at learning rate $0.05$ for wrapped-shift learning, at most $2048$ local rows in that optimization, $20$ seeded $k$-means iterations for residual-prototype initialization, $10$ complete post-gluing refinement sweeps, and seed $13$ for the single-seed runs. All rate computations include the shared representation map, chart-system information, every quantized shared codeword and scale, fixed codec metadata, and all token indices.

The complete encoder proceeds by first learning the shared representation and constructing the chart system, then discovering the orbit scaffolds and using them to build the nested shared codebooks. The resulting codebooks are subsequently refined with respect to the globally glued reconstruction and finally compiled into the finite decoder. Only this final compiled state is retained at decoding time.


\section{ADDITIONAL EXPERIMENTAL DETAILS AND RESULTS}
\label{app:experimental-details}

This appendix complements the main experimental section with the complete protocol and the diagnostics used to isolate the individual components of OrBIT. The purpose is not to repeat the main rate--distortion discussion, but to make explicit what is held fixed in each comparison and to report the measurements underlying the mechanism claims. Experiments~1--5 use the final codec configurations reported in the main text. Experiment~6 studies the amount of data required to construct an accepted scaffold, and Experiment~7 tests the stability of the final headline configuration across independent seeds.

\subsection{Common protocol and implementation checks}
\label{app:experimental-protocol}

For every embedding model we extract $12{,}000$ token embeddings and compress a table of
\[
D=10{,}000
\]
embeddings under uniform table weights. GPT-2 embeddings have original dimension $768$ and are reduced to $d=64$; the three $7$B embedding tables have original dimension $4096$ and are reduced to $d=192$. The representation map is fitted on the table being compressed and its storage is included in every reported rate. The corresponding shared representation-map costs are $1{,}597{,}440$ bits for GPT-2 and $25{,}296{,}896$ bits for each $4096$-dimensional table.

The standard OrBIT configuration uses an exact $A=2$ tight chart system with $L=4$ charts, with $M_\ell=32$ for GPT-2 and $M_\ell=96$ for the $7$B models. For each chart, up to five reproducibly generated anchor sets are considered, and candidate scaffolds are evaluated in seeded order; the first satisfying the full-table acceptance criterion
\[
\varepsilon_{\rm proj}\leq \tau_{\rm proj},
\qquad
\tau_{\rm proj}=10^{-3},
\]
is retained. The standard anchor budget is $0.75\%$ for GPT-2 and $1.2\%$ for the $7$B tables. At these budgets the accepted scaffolds attain full chart rank, so the projection bottleneck becomes negligible while the remaining orbit construction and realization mechanisms remain active. Experiment~6 characterizes the transition to this full-coverage regime and verifies that the standard budgets are conservative.

Unless otherwise stated, the compiled codec uses $T=7$ additive stages, $Q_{\ell,t}=32$ codewords per chart-stage pair, $5$ bits per token index, $8$-bit shared codeword coordinates, and a maximum CCDS support of $32$ for GPT-2 and $96$ for the $7$B tables. The common representation map, chart-system description, quantized shared codewords and scales, metadata, and all token indices are included in the reported rate.

Table~\ref{tab:appendix-standard-config} summarizes the standard operating point used throughout Experiments~1--3 and~5 and as the headline configuration in Experiment~7.

\begin{table*}[t]
\centering
\small
\setlength{\tabcolsep}{5pt}
\caption{Standard OrBIT configuration. The anchor fraction is measured relative to the $D=10{,}000$-token table. The reported rate is the complete compiled rate, including the shared representation map.}
\label{tab:appendix-standard-config}
\begin{tabular}{lcccccccc}
\hline
Model & Original dim. & $d$ & $L$ & $M_\ell$ & Anchor frac. & Max support & $T$ & Bits/token \\
\hline
GPT-2 & $768$ & $64$ & $4$ & $32$ & $0.75\%$ & $32$ & $7$ & $324.1472$ \\
Llama-2-7B & $4096$ & $192$ & $4$ & $96$ & $1.2\%$ & $96$ & $7$ & $2739.968$ \\
Mistral-7B & $4096$ & $192$ & $4$ & $96$ & $1.2\%$ & $96$ & $7$ & $2739.968$ \\
Mistral-7B-Instruct & $4096$ & $192$ & $4$ & $96$ & $1.2\%$ & $96$ & $7$ & $2739.968$ \\
\hline
\end{tabular}
\end{table*}

All chart systems are numerically certified before compression. In the standard $A=2$ runs, the maximum observed deviations from unit gain and tightness are on the order of $10^{-8}$, and the value of $A$ recovered from the frame trace agrees with $2$ to numerical precision. The same certification is performed separately for the $A=1,2,3$ systems in the redundancy ablation.

\paragraph{Baseline accounting.}
Experiment~1 compares OrBIT with PQ, OPQ, AQ, scalar quantization, and truncated SVD. For PQ, OPQ, and AQ we sweep the codebook configuration and index precision. Their shared codebooks, transforms, the common representation map, and token indices are included in the rate. We additionally report PQ-q8, OPQ-q8, and AQ-q8, in which the shared continuous codewords use the same $8$-bit-coordinate and $16$-bit-scale convention as OrBIT and reconstruction is recomputed after quantization. Hence the matched-precision comparison does not grant OrBIT an advantage from cheaper shared-codeword storage. The complete rate--distortion curves shown in the main text use the actual measured decoder output at every operating point.

\subsection{Experiment 1: additional rate--distortion details}
\label{app:exp1}

The OrBIT rate--distortion curve is obtained by retaining the same learned chart geometry and progressively decoding the first
\[
T=1,\ldots,7
\]
additive stages. Each point is therefore an actual truncated compiled decoder rather than an interpolation between measured operating points. The resulting prefix curve is empirically monotone on all four tables; monotonicity is observed rather than guaranteed by the refinement procedure. Figure~1 shows the complete rate--distortion curves.

\begin{figure*}[t]
    \centering
    \includegraphics[width=\textwidth]{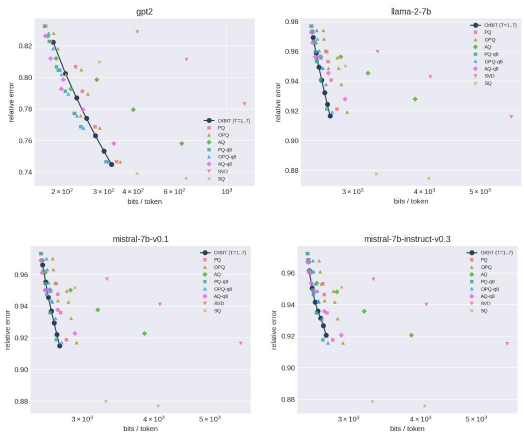}
    \caption{
    Rate--distortion curves across the four embedding tables.
    Rate is measured in bits per token and distortion by relative reconstruction
    error in the original embedding space. Lower values are better on both axes.
    }
    \label{fig:rate_distortion}
\end{figure*}

At \(T=7\), the complete compiled rates correspond to compression ratios of \(37.91\times\) for GPT-2 and \(23.92\times\) for each 7B embedding table relative to 16-bit storage of the original embeddings. The representation map is included in these ratios; in particular, the large \(4096\rightarrow192\) PCA map is not treated as free shared side information.

The baselines separate two conceptually different comparisons. PQ and OPQ impose a prescribed product geometry, while AQ is the closest baseline to OrBIT at the level of the coding mechanism: both represent a vector through a sum of entries selected from shared additive codebooks. The crucial difference is what codewords are admissible. AQ optimizes shared codewords without an orbit-scaffold restriction. OrBIT instead discovers a local scaffold \(S_\ell\) from a small anchor set through the wrapped-shift and interpolation construction, uses that geometry to constrain the shared codewords, distributes the fixed stage budget through the residual of the globally glued reconstruction, and combines the resulting local codes through redundant synthesis. Thus the OrBIT--AQ comparison asks whether adding discovered geometric structure to an additive-codebook architecture is beneficial end-to-end; it is not intended as a one-component ablation, since OrBIT also contains the chart, allocation, and gluing mechanisms isolated separately elsewhere.

This comparison is particularly favorable to the structured construction. Under the matched q8 accounting, OrBIT and AQ-q8 give respectively \(324.15/0.7449\) versus \(331.88/0.7582\) bits/token / relative error on GPT-2, \(2739.97/0.9163\) versus \(2911.54/0.9278\) on Llama-2-7B, and \(2739.97/0.9151\) versus \(2911.54/0.9228\) on Mistral-7B. Hence OrBIT is strictly better in both rate and distortion on all three tables. On Mistral-7B-Instruct the distortions are essentially tied at \(0.9205\), but OrBIT uses \(2739.97\) rather than \(2911.54\) bits/token, a rate reduction of approximately \(5.9\%\). This is notable precisely because AQ does not impose the orbit admissibility constraint on its shared codewords: the structured geometry is not purchased by sacrificing the rate--distortion behavior of the additive representation.

PQ and OPQ then provide a distinct and empirically harder test. They exploit a strong Cartesian product structure rather than unrestricted additive codewords. At the representative q8 operating points, OrBIT gives lower distortion than both PQ-q8 and OPQ-q8 on GPT-2, Llama-2-7B, and Mistral-7B at nearby rates. On Mistral-7B-Instruct, PQ-q8 and OPQ-q8 instead achieve lower distortion. This counterexample is useful: the experiments do not support a claim of universal dominance, but they show that a codec whose shared continuous objects are restricted by discovered orbit geometry can compete directly with mature product-quantization methods while substantially improving over the closest unconstrained additive comparator.

The matched q8 baselines also separate coding geometry from storage precision. Standard PQ, OPQ, and AQ measure performance under their usual storage representations, whereas PQ-q8, OPQ-q8, and AQ-q8 quantize their shared continuous codewords with the same 8-bit-coordinate and 16-bit-scale convention used by OrBIT and recompute the reconstruction after quantization. Consequently, the favorable OrBIT comparisons cannot be attributed simply to charging its shared continuous objects at a cheaper precision.

\subsection{Experiment 2: learned orbit geometry versus a rank-matched Gaussian control}
\label{app:exp2}

Experiment~2 isolates the geometric contribution of the orbit construction itself. Unlike the complete codec experiments, this diagnostic is performed directly at the scaffold level: no additive quantizer, shared-codebook optimization, or sparse realization solver is used in either arm. For each chart \(\ell\), we construct a data-aligned orbit scaffold \(S_\ell^{\mathrm{orb}}\) of rank
$
r=0.75M_\ell
$
and compare it with \(20\) independent rank-\(r\) Gaussian subspaces \(S_{\ell,j}^{\mathrm{gauss}}\). The Gaussian matrices are orthonormalized before evaluation, so the controls are unstructured random subspaces independent of the embedding table. The orbit and Gaussian arms therefore have identical rank and ambient chart dimension; what differs is whether their geometry has been discovered from the data through the orbit construction.

The orbit scaffold is deliberately constructed from a small anchor budget and then evaluated on the \emph{complete} chart table. GPT-2 uses \(24\) anchors per chart, corresponding to \(0.24\%\) of the \(10{,}000\)-token table and rank \(24\) in \(M_\ell=32\); the three \(7\)B tables use \(72\) anchors, corresponding to \(0.72\%\) of the table and rank \(72\) in \(M_\ell=96\). For a candidate rank-\(r\) scaffold \(S\), we report
\[
\rho_\ell(S):=
\frac{\|(I-\Pi_S)Z^{(\ell)}\|_\mu}
{\|Z^{(\ell)}\|_\mu}.
\]
Lower values mean that the rank-constrained geometry captures more of the full local embedding table.

The Gaussian control also provides a useful calibration. For a data-independent random rank-\(r\) orthogonal projector \(P\) in dimension \(M_\ell\),
\[
\mathbb{E}
\frac{\|(I-P)Z^{(\ell)}\|_\mu^2}
{\|Z^{(\ell)}\|_\mu^2}
=
1-\frac{r}{M_\ell}.
\]
Since \(r/M_\ell=0.75\), the expected squared unexplained energy is \(0.25\), corresponding to a root-residual scale of \(0.5\). The Gaussian controls in Table~\ref{tab:appendix-geometry-ablation} concentrate almost exactly at this value, confirming that they behave as the intended rank-matched unstructured null.

\begin{table*}[t]
\centering
\small
\setlength{\tabcolsep}{5pt}
\caption{Experiment~2: data-aligned orbit scaffold versus a rank-matched Gaussian control. Each Gaussian entry is the mean \(\pm\) standard deviation over \(20\) independent random subspaces; ``Best Gaussian'' is the smallest residual among those \(20\) draws. All methods have rank \(r=0.75M_\ell\), and evaluation uses the complete local table. Lower relative projection error is better.}
\label{tab:appendix-geometry-ablation}
\begin{tabular}{lcccc}
\hline
Model & Chart & Orbit scaffold & Gaussian control & Best Gaussian \\
\hline
GPT-2 & $1$ & $0.47290$ & $0.49930 \pm 0.00641$ & $0.48787$ \\
      & $2$ & $0.47890$ & $0.50236 \pm 0.00688$ & $0.48572$ \\
      & $3$ & $0.48054$ & $0.50028 \pm 0.00525$ & $0.49094$ \\
      & $4$ & $0.47782$ & $0.50233 \pm 0.00623$ & $0.49177$ \\
\hline
Llama-2-7B & $1$ & $0.47805$ & $0.49885 \pm 0.00202$ & $0.49489$ \\
            & $2$ & $0.47321$ & $0.49905 \pm 0.00295$ & $0.49120$ \\
            & $3$ & $0.47674$ & $0.49941 \pm 0.00317$ & $0.49518$ \\
            & $4$ & $0.47689$ & $0.50017 \pm 0.00235$ & $0.49676$ \\
\hline
Mistral-7B & $1$ & $0.47625$ & $0.49872 \pm 0.00234$ & $0.49409$ \\
           & $2$ & $0.47079$ & $0.49893 \pm 0.00335$ & $0.49007$ \\
           & $3$ & $0.47601$ & $0.49944 \pm 0.00360$ & $0.49485$ \\
           & $4$ & $0.47332$ & $0.50018 \pm 0.00267$ & $0.49660$ \\
\hline
Mistral-7B-Instruct & $1$ & $0.47290$ & $0.49872 \pm 0.00236$ & $0.49405$ \\
                    & $2$ & $0.47227$ & $0.49894 \pm 0.00338$ & $0.49004$ \\
                    & $3$ & $0.47340$ & $0.49942 \pm 0.00361$ & $0.49475$ \\
                    & $4$ & $0.47372$ & $0.50018 \pm 0.00271$ & $0.49660$ \\
\hline
\end{tabular}
\end{table*}

The result is stronger than a comparison of averages. In all \(16\) model--chart pairs, the learned orbit scaffold has lower projection error than the \emph{best} of the \(20\) independently drawn Gaussian controls. Hence every orbit scaffold beats every corresponding random draw, for \(320\) rank-matched Gaussian controls in total. At the model level, averaging across charts gives
\[
\begin{array}{lccc}
\hline
\text{Model} & \text{Orbit} & \text{Gaussian} & \text{Relative reduction} \\
\hline
\text{GPT-2} & 0.47754 & 0.50107 & 4.70\% \\
\text{Llama-2-7B} & 0.47622 & 0.49937 & 4.64\% \\
\text{Mistral-7B} & 0.47409 & 0.49932 & 5.05\% \\
\text{Mistral-7B-Instruct} & 0.47307 & 0.49932 & 5.26\% \\
\hline
\end{array}
\]
and, because \(\rho_\ell^2\) is the fraction of local squared energy left unexplained, these improvements correspond to reductions of approximately \(9.17\%\), \(9.06\%\), \(9.85\%\), and \(10.24\%\), respectively, in squared unexplained projection energy.

This experiment therefore isolates the role assigned to orbit geometry in OrBIT. A rank-\(0.75M_\ell\) subspace already has substantial approximation power purely by dimension, which is why the random controls lie near \(0.5\). The nontrivial observation is that the orbit construction uses only a small anchor set yet consistently places those same \(r\) dimensions in directions that explain more of the \emph{full} chart table. The advantage is reproduced across architectures, across all charts, and against every sampled rank-matched random control. Thus the scaffold is not functioning merely as a generic low-dimensional container: its data-aligned orbit structure measurably reduces the geometric projection bottleneck that subsequently constrains the shared OrBIT codewords.

Experiment~2 therefore isolates what Experiment~1 cannot. At rank \(0.75M_\ell\), an arbitrary subspace already captures approximately \(75\%\) of the squared energy simply by dimensionality; the relevant question is whether the same rank can be placed more intelligently. The orbit construction does so using less than \(1\%\) of the table as anchor targets: it consistently selects directions that leave about \(9\%\)--\(10\%\) less unexplained energy than the rank-matched random geometry. Combined with Experiment~1, the evidence has two levels: the scaffold diagnostic shows that orbit dynamics discover a better-aligned admissible geometry, while the end-to-end AQ comparison shows that imposing this geometry remains beneficial after the full representation is quantized, indexed, stored, and decoded.

\subsection{Experiment 3: gluing-aware refinement}
\label{app:exp3}

The refinement ablation compares the complete reconstruction immediately before and after post-gluing codebook refinement. The number of codewords, quantization precision, chart system, token indices, and total storage are identical in both arms. A proposed chart-stage update, including reassignment to the newly compiled codewords, is retained only when it decreases the current global reconstruction objective, so the ablation directly tests whether revisiting locally constructed codewords through the global residual is useful.

Table~\ref{tab:appendix-refinement} reports both the reduced-space objective used by the refinement and the final error after lifting back to the original embedding space. Every model improves at exactly the same bitrate.

\begin{table*}[t]
\centering
\small
\setlength{\tabcolsep}{4pt}
\caption{Experiment~3: post-gluing refinement. The bitrate is identical before and after refinement. The PCA-space columns report the absolute weighted reconstruction error optimized by refinement; the end-to-end column is measured after lifting to the original embedding space.}
\label{tab:appendix-refinement}
\begin{tabular}{lccccc}
\hline
Model & PCA before & PCA after & Reduction & End-to-end before/after & Accepted updates/280 \\
\hline
GPT-2
& $0.47502$ & $0.45549$ & $4.11\%$
& $0.74570/0.74493$ & $265$ \\
Llama-2-7B
& $0.30500$ & $0.29659$ & $2.76\%$
& $0.91865/0.91631$ & $280$ \\
Mistral-7B
& $0.04777$ & $0.04643$ & $2.82\%$
& $0.91737/0.91518$ & $280$ \\
Mistral-7B-Instruct
& $0.04968$ & $0.04830$ & $2.77\%$
& $0.92299/0.92048$ & $280$ \\
\hline
\end{tabular}
\end{table*}

The reduced-space improvement is substantially larger than the corresponding end-to-end movement because the latter also contains the fixed representation-truncation error. The relevant mechanism result is nevertheless unambiguous: once all local codebooks exist, the global residual exposes additional correctable error, and exploiting it improves the actual deployed reconstruction without spending additional bits.

\subsection{Experiment 4: matched-capacity redundancy audit}
\label{app:exp4}

The standard experiments use the $A=2$ chart family of Table~\ref{tab:appendix-standard-config}. Experiment~4 is deliberately different: it is a mechanism audit designed to make the redundancy term observable as cleanly as possible. Only for this ablation, we use independent orthogonal decompositions to construct increasingly redundant analysis systems and compare
\[
(A,L,T)=(1,2,24),\qquad
(2,4,12),\qquad
(3,6,8).
\]
Thus
\[
LT=48
\]
is fixed in all three arms. The nonredundant $A=1$ product boundary, the $A=2$ system, and the $A=3$ system therefore have exactly the same number of chart-stage codebooks and exactly the same bitrate: $441.5552$ bits/token for GPT-2 and $2890.144$ bits/token for each $7$B table.

For this audit we record
\[
\varepsilon_{\rm loc},
\qquad
\varepsilon_{\rm inc},
\qquad
\frac{\varepsilon_{\rm inc}^{2}}{\varepsilon_{\rm loc}^{2}},
\]
together with the measured global residual after canonical gluing. The third quantity is the fraction of accumulated local squared error removed by the inconsistency-cancellation term.

\begin{table*}[t]
\centering
\scriptsize
\setlength{\tabcolsep}{4pt}
\caption{Experiment~4: matched-capacity redundancy audit. ``Cancelled'' is $\varepsilon_{\rm inc}^{2}/\varepsilon_{\rm loc}^{2}$. The global residual is the weighted reduced-space norm entering the tight-frame identity. All three values of $A$ have identical bitrate within each model.}
\label{tab:appendix-redundancy-full}
\begin{tabular}{lcccccc}
\hline
Model & $A$ & $\varepsilon_{\rm loc}$ & $\varepsilon_{\rm inc}$ & Cancelled & Global residual & End-to-end rel. error \\
\hline
GPT-2
& $1$ & $0.18893$ & $3.38\times10^{-7}$ & $0.00\%$ & $0.18893$ & $0.73763$ \\
& $2$ & $2.48802$ & $2.47362$ & $98.85\%$ & $0.18903$ & $0.73764$ \\
& $3$ & $4.32155$ & $4.30914$ & $99.43\%$ & $0.18897$ & $0.73763$ \\
\hline
Llama-2-7B
& $1$ & $0.22596$ & $1.51\times10^{-7}$ & $0.00\%$ & $0.22596$ & $0.89909$ \\
& $2$ & $0.73135$ & $0.65933$ & $81.27\%$ & $0.22378$ & $0.89863$ \\
& $3$ & $1.19937$ & $1.13690$ & $89.85\%$ & $0.22057$ & $0.89796$ \\
\hline
Mistral-7B
& $1$ & $0.03577$ & $2.45\times10^{-8}$ & $0.00\%$ & $0.03577$ & $0.89990$ \\
& $2$ & $0.11775$ & $0.10689$ & $82.41\%$ & $0.03492$ & $0.89885$ \\
& $3$ & $0.19439$ & $0.18509$ & $90.66\%$ & $0.03430$ & $0.89810$ \\
\hline
Mistral-7B-Instruct
& $1$ & $0.03968$ & $2.28\times10^{-8}$ & $0.00\%$ & $0.03968$ & $0.90623$ \\
& $2$ & $0.11363$ & $0.10002$ & $77.48\%$ & $0.03813$ & $0.90394$ \\
& $3$ & $0.18355$ & $0.17124$ & $87.04\%$ & $0.03814$ & $0.90397$ \\
\hline
\end{tabular}
\end{table*}

The control behaves exactly as intended. At $A=1$, the inconsistency component is numerically zero. Once redundant views are introduced, a large part of the local error becomes mutually inconsistent and is removed by canonical synthesis: between $77.48\%$ and $98.85\%$ at $A=2$, and between $87.04\%$ and $99.43\%$ at $A=3$.

The ablation also exposes the second half of the theoretical mechanism rather than hiding it. Redundancy can increase the aggregate local error $\varepsilon_{\rm loc}$, so cancellation does not by itself guarantee a lower final residual. GPT-2 is the clearest example: despite cancellation above $98\%$, its final residual is essentially unchanged. On the larger embedding tables, the cancellation is sufficiently strong to reduce the global error: relative to $A=1$, the best redundant configuration lowers the measured reduced-space global residual by approximately $2.4\%$ on Llama-2-7B, $4.1\%$ on Mistral-7B, and $3.9\%$ on Mistral-7B-Instruct.

Finally, the measured identity underlying the redundancy decomposition is numerically exact. Across all twelve runs, the absolute discrepancy between the directly measured global residual and the value reconstructed from the local-error/cancellation decomposition is below $4\times10^{-9}$. The experiment therefore verifies both the presence of the cancellation mechanism and its precise quantitative accounting.

\subsection{Experiment 5: compilation and decoder-only audit}
\label{app:exp5}

The orbit construction contains objects that are useful to the \emph{encoder} but need not survive in the deployed decoder. Experiment~5 tests this distinction directly. After all shared codewords have been constructed and quantized, we reconstruct the complete table from the compiled payload alone. In particular, this decoder is not given the wrapped shifts $W_\ell$, interpolation maps $J_{\ell,k}$, orbit dictionaries $\mathcal D_\ell$, anchors, or CCDS construction state.

For every model, the decoder-only reconstruction agrees exactly with the reconstruction obtained before discarding the construction state. The maximum chart-space gap, maximum decoder output gap, relative decoder gap, and original-space relative gap are all $0$ at the reported numerical precision.

\begin{table*}[t]
\centering
\scriptsize
\setlength{\tabcolsep}{1pt}
\caption{Experiment~5: compilation audit. ``Explicit'' retains the orbit/scaffold construction state in addition to the compiled payload. ``Compiled'' retains only the objects required by the deployed decoder. Reconstruction is identical in the two cases.}
\label{tab:appendix-compilation}
\begin{tabular}{lccccc}
\hline
Model & Explicit bits/token & Compiled bits/token & Saved bits/token & Storage reduction & Reconstruction gap \\
\hline
GPT-2
& $355.5468$ & $324.1472$ & $31.3996$ & $8.83\%$ & $0$ \\
Llama-2-7B
& $2889.1328$ & $2739.9680$ & $149.1648$ & $5.16\%$ & $0$ \\
Mistral-7B
& $2889.1328$ & $2739.9680$ & $149.1648$ & $5.16\%$ & $0$ \\
Mistral-7B-Instruct
& $2889.1328$ & $2739.9680$ & $149.1648$ & $5.16\%$ & $0$ \\
\hline
\end{tabular}
\end{table*}

This audit is important for the interpretation of the storage law. The orbit machinery determines how the shared codewords are discovered and realized, but it is not unreported decoder state. Once compilation has taken place, the orbit construction can be discarded without changing a single reconstructed coordinate. The reported OrBIT rate therefore corresponds to a genuine finite transmitted representation rather than to a decoder that implicitly retains its training geometry.

\subsection{Experiment 6: anchor-budget and scaffold-rank diagnostic}
\label{app:exp6}

The main experiments intentionally use very small anchor subsets. Experiment~6 tests how much of the table is actually required before the orbit construction produces an accepted scaffold on the \emph{full} local table.

This diagnostic is run independently from the standard encoder wrapper so that no automatic lower bound on the anchor budget is imposed. For every requested budget, the orbit geometry is reconstructed from scratch and we report the resulting scaffold rank and canonical full-table projection error. Acceptance always uses the same threshold
\[
\tau_{\rm proj}=10^{-3}.
\]

The dimensional structure makes this test especially informative. In the finite-dimensional construction,
\[
\operatorname{ran}(J_{\ell,k})
\subseteq
\operatorname{span}(Y_{\ell,k}),
\]
so an anchor-generated scaffold cannot have rank larger than the number of available anchor targets. Hence a $96$-dimensional chart cannot be full rank with only $50$, $64$, or $75$ anchors. The experiment asks whether the construction actually reaches full chart rank once doing so becomes dimensionally feasible.

\begin{table*}[t]
\centering
\scriptsize
\setlength{\tabcolsep}{4pt}
\caption{Experiment~6: anchor-budget diagnostic. The rank column is the scaffold rank in a chart of dimension $M_\ell$. The projection error is averaged over the four charts and is always evaluated on the complete local table.}
\label{tab:appendix-anchor-budget}
\begin{tabular}{lccccc}
\hline
Model & Anchor frac. & Anchors & Rank / $M_\ell$ & Mean $\varepsilon_{\rm proj}$ & All charts accepted \\
\hline
GPT-2
& $0.50\%$ & $50$  & $32/32$ & $6.31\times10^{-7}$ & Yes \\
& $0.64\%$ & $64$  & $32/32$ & $6.19\times10^{-7}$ & Yes \\
& $0.75\%$ & $75$  & $32/32$ & $6.74\times10^{-7}$ & Yes \\
& $1.00\%$ & $100$ & $32/32$ & $5.88\times10^{-7}$ & Yes \\
\hline
Llama-2-7B
& $0.50\%$ & $50$  & $50/96$ & $2.3794\times10^{-1}$ & No \\
& $0.64\%$ & $64$  & $64/96$ & $1.9669\times10^{-1}$ & No \\
& $0.75\%$ & $75$  & $75/96$ & $1.5860\times10^{-1}$ & No \\
& $1.00\%$ & $100$ & $96/96$ & $5.40\times10^{-7}$ & Yes \\
& $1.20\%$ & $120$ & $96/96$ & $4.57\times10^{-7}$ & Yes \\
\hline
Mistral-7B
& $0.50\%$ & $50$  & $50/96$ & $3.8141\times10^{-2}$ & No \\
& $0.64\%$ & $64$  & $64/96$ & $3.1573\times10^{-2}$ & No \\
& $0.75\%$ & $75$  & $75/96$ & $2.5444\times10^{-2}$ & No \\
& $1.00\%$ & $100$ & $96/96$ & $8.40\times10^{-8}$ & Yes \\
& $1.20\%$ & $120$ & $96/96$ & $7.25\times10^{-8}$ & Yes \\
\hline
Mistral-7B-Instruct
& $0.50\%$ & $50$  & $50/96$ & $3.6489\times10^{-2}$ & No \\
& $0.64\%$ & $64$  & $64/96$ & $3.0279\times10^{-2}$ & No \\
& $0.75\%$ & $75$  & $75/96$ & $2.4360\times10^{-2}$ & No \\
& $1.00\%$ & $100$ & $96/96$ & $8.36\times10^{-8}$ & Yes \\
& $1.20\%$ & $120$ & $96/96$ & $7.02\times10^{-8}$ & Yes \\
\hline
\end{tabular}
\end{table*}

The transition is exceptionally sharp. For every $96$-dimensional model, the scaffold rank tracks the anchor count exactly at the subcritical budgets,
\[
50,\quad64,\quad75,
\]
and all such scaffolds fail the full-table acceptance test. At the first tested budget above the chart dimension, $100$ anchors, every chart in every model attains rank $96$ and the projection error collapses to the $10^{-7}$--$10^{-8}$ scale. Increasing the budget to $120$ anchors preserves this regime.

GPT-2 provides the complementary case. Its charts have dimension only $32$, so the smallest tested budget, $50$ anchors or $0.5\%$ of the table, is already above the dimensional threshold. Every tested GPT-2 scaffold is therefore full rank and accepted, with projection error remaining at the $10^{-7}$ scale throughout the entire sweep.

Two conclusions follow directly from this diagnostic. First, the amount of data required to recover an accepted scaffold is governed primarily by the \emph{local chart dimension}, rather than by the size of the embedding table. Second, the production budgets used in the headline experiments are conservative: GPT-2 already succeeds at $0.5\%$ although the main experiments use $0.75\%$, and all three $96$-dimensional models already succeed at $1.0\%$ although the main experiments use $1.2\%$. Thus the reported compression results do not rely on tuning the anchor fraction to a fragile minimum-success point.

\subsection{Experiment 7: three-seed robustness of the headline configuration}
\label{app:exp7}

The complete rate--distortion curves are expensive because every point requires constructing and evaluating the shared codec. We therefore use a fixed seed for the full curves and separately test the final headline operating point under three independent seeds,
\[
13,\qquad29,\qquad47.
\]
For each seed the full headline pipeline is rerun, including embedding-table sampling, representation fitting, chart construction, scaffold discovery, and shared-codeword construction. The codec dimensions and transmitted rate are held fixed.

Table~\ref{tab:appendix-seeds} reports mean and standard deviation over the three runs. The final end-to-end relative error is highly stable: the standard deviation is approximately $3.1\times10^{-4}$ for GPT-2 and between $1.0\times10^{-3}$ and $1.6\times10^{-3}$ for the $7$B tables. Relative to the corresponding means, all four end-to-end standard deviations are below $0.2\%$.

\begin{table*}[t]
\centering
\scriptsize
\setlength{\tabcolsep}{3pt}
\caption{Experiment~7: robustness of the final $T=7$ headline operating point over seeds $\{13,29,47\}$. Values are mean $\pm$ standard deviation.}
\label{tab:appendix-seeds}
\begin{tabular}{lccccc}
\hline
Model & PCA-space rel. error & End-to-end rel. error & Mean cosine & Recall@10 & Bits/token \\
\hline
GPT-2
& $0.26113\pm0.00063$
& $0.74506\pm0.00031$
& $0.66730\pm0.00084$
& $0.29727\pm0.01412$
& $324.1472\pm0$ \\
Llama-2-7B
& $0.60226\pm0.01186$
& $0.91820\pm0.00158$
& $0.39200\pm0.00323$
& $0.33103\pm0.00466$
& $2739.968\pm0$ \\
Mistral-7B
& $0.57864\pm0.00966$
& $0.91598\pm0.00120$
& $0.39796\pm0.00303$
& $0.36715\pm0.00710$
& $2739.968\pm0$ \\
Mistral-7B-Instruct
& $0.62136\pm0.00638$
& $0.92099\pm0.00105$
& $0.37600\pm0.00228$
& $0.32091\pm0.00454$
& $2739.968\pm0$ \\
\hline
\end{tabular}
\end{table*}

The rate is invariant across seeds, as expected from the fixed codec dimensions. More importantly, the small end-to-end variance shows that the headline distortion values are not favorable stochastic outliers. The reduced-space error varies somewhat more strongly, particularly on Llama-2-7B and Mistral-7B, but this variability remains small after evaluation in the original embedding space. Mean cosine similarity is similarly stable. Nearest-neighbor recall@10 has the largest relative variation, which is consistent with its discrete ranking nature, but it does not induce corresponding instability in either reconstruction error or cosine fidelity.

Taken together with the mechanism ablations, the seed study strengthens the interpretation of the main results. The observed behavior is reproduced across independently reconstructed orbit geometries: the data-aligned scaffold consistently reduces projection error relative to the rank-matched Gaussian control; gluing-aware refinement reduces the global objective at fixed rate; redundant synthesis produces the predicted cancellation mechanism; the orbit construction can be compiled away exactly; accepted scaffolds are recovered from approximately one percent of the table once the local dimensional threshold is reached; and the final compressed operating point remains stable across independent runs.

\section{Proofs of the main results}
\label{app:proofs}

\textbf{{Proof of Theorem \ref{thm:redundant orbit add comprssion decomposition} (Redundant orbit chart error decomposition).}}
We use throughout the pointwise table-level extensions of the operators. If
\(T:\mathcal H\to K\) is bounded and \(T_D: \mathcal H^D\to K^D\) is defined by
\(T_D(e_1,\ldots,e_D)=(Te_1,\ldots,Te_D)\), then, with respect to the weighted empirical norm,
\[
\|T_DE\|_\mu^2
=
\sum_{n=1}^D\mu_n\|Te_n\|^2
\leq
\|T\|^2\sum_{n=1}^D\mu_n\|e_n\|^2
=
\|T\|^2\|E\|_\mu^2.
\]
Thus \(\|T_D\|\leq\|T\|\), and equality follows by testing on tables with a single nonzero entry. Hence \(\|T_D\|=\|T\|\). In particular, \(\|\Gamma^D\|=\|\Gamma\|\). If \(R^*R=A\,\mathrm{Id}_{\mathcal H}\), then also \(R_D^*R_D=A\,\mathrm{Id}_{\mathcal H^D}\), so \(\|R_D\|=\sqrt A\) and \(\|\Gamma_A^D\|=\|\Gamma_A\|=1/\sqrt A\).

Let \(\eta := R_D E - \widehat{Z} \in \mathcal{Z}^D\), where \(\widehat{Z} = (\widehat{Z}^{(1)}, \dots, \widehat{Z}^{(L)})\). By definition we have \(\widehat{E} = \Gamma^D \widehat{Z}\). Therefore
\begin{equation}
\label{eq:E-hatE}
E - \widehat{E} = E - \Gamma^D \widehat{Z} = \bigl(E - \Gamma^D R_D E\bigr) + \Gamma^D \eta.
\end{equation}
Taking the weighted empirical norm and applying the triangle inequality yields
\[
\|E - \widehat{E}\|_\mu \le \|E - \Gamma^D R_D E\|_\mu + \|\Gamma^D \eta\|_\mu = \varepsilon_{\mathrm{glue}}(E) + \|\Gamma^D \eta\|_\mu.
\]
Since \(\ker \Gamma^D\) is a closed subspace of \(\mathcal{Z}^D\), we have the orthogonal decomposition
\[
\eta = \Pi_{\ker \Gamma^D} \eta + \bigl(I - \Pi_{\ker \Gamma^D}\bigr) \eta.
\]
Applying \(\Gamma^D\) and using that \(\Gamma^D\) annihilates \(\ker \Gamma^D\) gives \(\Gamma^D \eta = \Gamma^D \bigl(I - \Pi_{\ker \Gamma^D}\bigr) \eta\). Consequently
\[
\|\Gamma^D \eta\|_\mu \le \|\Gamma\| \cdot \bigl\| \bigl(I - \Pi_{\ker \Gamma^D}\bigr) \eta \bigr\|_\mu,
\]
where we used the operator-norm bound \(\|\Gamma^D\| \le \|\Gamma\|\). Substituting back produces
\[
\|E - \widehat{E}\|_\mu \le \varepsilon_{\mathrm{glue}}(E) + \|\Gamma\| \cdot \bigl\| \bigl(I - \Pi_{\ker \Gamma^D}\bigr) \eta \bigr\|_\mu.
\]
Orthogonality of the decomposition of \(\eta\) immediately implies
\[
\bigl\| \bigl(I - \Pi_{\ker \Gamma^D}\bigr) \eta \bigr\|_\mu^2 = \|\eta\|_\mu^2 - \|\Pi_{\ker \Gamma^D} \eta\|_\mu^2,
\]
and therefore
\[
\|E - \widehat{E}\|_\mu \le \varepsilon_{\mathrm{glue}}(E) + \|\Gamma\| \bigl( \|\eta\|_\mu^2 - \|\Pi_{\ker \Gamma^D} \eta\|_\mu^2 \bigr)^{1/2}.
\]
By definition we have \(\varepsilon_{\ker}(E) := \|\Pi_{\ker \Gamma^D} \eta\|\), which is exactly the claimed general bound.

Now assume that $\{R_\ell\}_{\ell=1}^L$ is $A$-tight and $\Gamma=\Gamma_A=A^{-1}R^*$. Then $\Gamma_A R=\mathrm{Id}_{\mathcal H}$, hence $\varepsilon_{\rm glue}(E)=0$, and $\ker \Gamma_A^D=(\operatorname{Ran}R_D)^\perp$. Therefore by (\ref{eq:E-hatE})
\begin{equation}
E-\widehat E
=
\Gamma_A^D\eta
=
\Gamma_A^D\Pi_{\operatorname{Ran}R_D}\eta.
\end{equation}

For $y\in\operatorname{Ran}R_D$, write $y=R_DF$. Since $R_D^*R_D=A\,\mathrm{Id}_{\mathcal H^D}$, we have $\|y\|_\mu^2=A\|F\|_\mu^2$ and $\Gamma_A^D y=F$. Hence
\begin{equation}
\|\Gamma_A^D y\|_\mu
=
\frac{1}{\sqrt A}\|y\|_\mu,
\qquad
y\in\operatorname{Ran}R_D.
\end{equation}
Applying this identity to $\Pi_{\operatorname{Ran}R_D}\eta$ gives
\begin{equation}
\|E-\widehat E\|_\mu
=
\frac{1}{\sqrt A}
\|\Pi_{\operatorname{Ran}R_D}\eta\|_\mu.
\end{equation}

Since $R_DE\in\operatorname{Ran}R_D$,
\begin{equation}
(I-\Pi_{\operatorname{Ran}R_D})\eta
=
-(I-\Pi_{\operatorname{Ran}R_D})\widehat Z.
\end{equation}
Therefore
\begin{equation}
\|\Pi_{\operatorname{Ran}R_D}\eta\|_\mu^2
=
\|\eta\|_\mu^2
-
\operatorname{dist}\big(\widehat Z,\operatorname{Ran}R_D\big)^2
=
\varepsilon_{\rm loc}^2(E)-\varepsilon_{\rm inc}^2(E).
\end{equation}
Substitution yields
\begin{equation}
\|E-\widehat E\|_\mu
=
\frac{1}{\sqrt A}
\big(
\varepsilon_{\rm loc}^2(E)-\varepsilon_{\rm inc}^2(E)
\big)^{1/2},
\end{equation}
which proves the canonical $A$-tight statement.
\qed

\textbf{Proof of Proposition~\ref{prop:nested sequential} (Nested sequential shared-codebook residual identity).}
Fix a sequential step $s$ and write $\ell:=\ell_s$, $G:=G^{(s-1)}$, $U:=U^{(s)}$, and $V:=V^{(s)}$. Since $R_\ell R_\ell^*=I_{\mathcal Z_\ell}$, the operator $P_\ell:=R_\ell^*R_\ell$ is the orthogonal projector onto $\operatorname{Ran}R_\ell^*$. Moreover, $R_\ell^*$ is an isometry from $\mathcal Z_\ell$ into $\mathcal H$. Using \eqref{eq:nested-residual-update},
\begin{equation}
G^{(s)}
=
(I-P_\ell)G
+
R_\ell^*
\left(
R_\ell G-\frac{1}{A}V
\right).
\end{equation}
The two terms on the right are orthogonal. Applying this identity tokenwise and summing with the weights $\mu_n$ gives
\begin{equation}
\|G^{(s)}\|_\mu^2
=
\|(I-P_\ell)G\|_\mu^2
+
\left\|
R_\ell G-\frac{1}{A}V
\right\|_\mu^2.
\end{equation}
Similarly,
\begin{equation}
\|G\|_\mu^2
=
\|(I-P_\ell)G\|_\mu^2
+
\|R_\ell G\|_\mu^2.
\end{equation}
Subtracting the last two identities and using $U=A R_\ell G$ yields
\begin{equation}
\|G^{(s)}\|_\mu^2
=
\|G\|_\mu^2
-
\frac{1}{A^2}\|U\|_\mu^2
+
\frac{1}{A^2}\|U-V\|_\mu^2,
\end{equation}
which is \eqref{eq:nested-residual-identity}. The stated nonincrease condition follows immediately.
\qed

\textbf{Proof of Proposition \ref{prop:gluing aware refinement} (Scaffold-aligned post-gluing refinement).}
The argument is the post-gluing analogue of the residual decomposition used in Proposition~2.5 for the sequential initialization. There, a new stage is fitted to the residual left by the stages already constructed. Here, all \(LT\) stages are already present, so we first remove the particular stage containing \(c_{\mathrm{old}}\) and ask what that stage should represent in the completed global reconstruction.

Let
\[
V_{\ell,t}^{\mathrm{old}}
:=
\bigl(
c_{\ell,t,i_{\ell,1,t}},
\ldots,
c_{\ell,t,i_{\ell,D,t}}
\bigr)
\in \mathcal Z_\ell^D
\]
denote the current compiled table contributed by stage \((\ell,t)\). Since canonical \(A\)-tight gluing gives
\[
\widehat E_{\mathrm{old}}
=
\frac{1}{A}
\sum_{j=1}^L
R_j^*\widehat Z^{(j)},
\]
the contribution of this particular stage to the global reconstruction is
$$
\frac{1}{A}R_\ell^*V_{\ell,t}^{\mathrm{old}}.
$$
Remove it and define
\[
\widehat E_{-\ell,t}
:=
\widehat E_{\mathrm{old}}
-
\frac{1}{A}R_\ell^*V_{\ell,t}^{\mathrm{old}},
\qquad
H_{\ell,t}
:=
E-\widehat E_{-\ell,t}.
\]
Thus \(H_{\ell,t}\) is the global residual that remains after every decoder contribution except stage \((\ell,t)\) has been taken into account.

Exactly as in the sequential construction, the chart-visible target associated with this residual is
\[
U_{\ell,t}^{\star}
:=
A R_\ell H_{\ell,t}
\in \mathcal Z_\ell^D.
\]
This is the post-gluing counterpart of the sequential target
$
U^{(s)}
=
A R_{\ell_s}G^{(s-1)}
$
from Proposition~\ref{prop:nested sequential}. The difference is that \(U^{(s)}\) is formed before a new stage is inserted, whereas \(U_{\ell,t}^{\star}\) is formed after removing an already existing stage from the completed reconstruction.

We first show that \(U_{\ell,t}^{\star}\) plays exactly the same role in refinement that \(U^{(s)}\) plays during initialization. Set
$
P_\ell:=R_\ell^*R_\ell.
$
Since
$
R_\ell R_\ell^*=I_{\mathcal Z_\ell},
$
the operator \(P_\ell\) is the orthogonal projector onto \(\operatorname{Ran}R_\ell^*\), and \(R_\ell^*:\mathcal Z_\ell\to \mathcal H\) is an isometry. For any candidate replacement stage table
$
V\in \mathcal Z_\ell^D,
$
the corresponding global reconstruction is
\[
\widehat E(V)
:=
\widehat E_{-\ell,t}
+
\frac{1}{A}R_\ell^*V.
\]
Hence
\[
E-\widehat E(V)
=
H_{\ell,t}
-
\frac{1}{A}R_\ell^*V.
\]
Using
\[
U_{\ell,t}^{\star}
=
A R_\ell H_{\ell,t},
\]
we obtain
\begin{align*}
H_{\ell,t}
-
\frac{1}{A}R_\ell^*V
&=
(I-P_\ell)H_{\ell,t}
+
P_\ell H_{\ell,t}
-
\frac{1}{A}R_\ell^*V\\
&=
(I-P_\ell)H_{\ell,t}
+
R_\ell^*
\left(
R_\ell H_{\ell,t}
-
\frac{1}{A}V
\right)\\
&=
(I-P_\ell)H_{\ell,t}
+
\frac{1}{A}R_\ell^*
\left(
U_{\ell,t}^{\star}-V
\right).
\end{align*}
The two terms on the right are orthogonal: the first belongs to
\[
\ker R_\ell
=
\bigl(\operatorname{Ran}R_\ell^*\bigr)^\perp,
\]
whereas the second belongs to \(\operatorname{Ran}R_\ell^*\). Applying the identity tokenwise, summing with the weights \(\mu_n\), and using that \(R_\ell^*\) is an isometry gives
\begin{equation}
\label{eq:post-gluing-stage-identity}
\left\|
E-\widehat E(V)
\right\|_\mu^2
=
\left\|
(I-P_\ell)H_{\ell,t}
\right\|_\mu^2
+
\frac{1}{A^2}
\left\|
U_{\ell,t}^{\star}-V
\right\|_\mu^2.
\end{equation}
This is the refinement analogue of the identity in Proposition~\ref{prop:nested sequential}. The first term is invisible to chart \(\ell\) and is independent of the replacement stage. Therefore every possible improvement produced by stage \((\ell,t)\) is completely determined by how well its realized table approximates \(U_{\ell,t}^{\star}\).

We now specialize this identity to the single codeword \(c_{\mathrm{old}}\). Recall that
$c_{\rm old}:=c_{\ell,t,r_0}\in \mathcal Z_\ell$ and
$$
I=\{n:i_{\ell,n,t}=r_0\}
\text{ with }
m_I:=\sum_{n\in I}\mu_n>0.
$$

The assignments are held fixed, as required in the statement of the proposition. For any candidate codeword \(c\in \mathcal Z_\ell\), define the corresponding stage table \(V(c)\in \mathcal Z_\ell^D\) by
\[
V(c)_n
:=
\begin{cases}
c, & n\in I,\\
(V_{\ell,t}^{\mathrm{old}})_n, & n\notin I.
\end{cases}
\]
Then
\[
V(c_{\mathrm{old}})
=
V_{\ell,t}^{\mathrm{old}}.
\]
Write
\[
U_{\ell,t}^{\star}
=
\bigl(
u_{\ell,t,1}^{\star},
\ldots,
u_{\ell,t,D}^{\star}
\bigr).
\]
Since only the rows indexed by \(I\) depend on \(c\),
\begin{align}
\left\|
U_{\ell,t}^{\star}-V(c)
\right\|_\mu^2
&=
\sum_{n\notin I}
\mu_n
\left\|
u_{\ell,t,n}^{\star}
-
(V_{\ell,t}^{\mathrm{old}})_n
\right\|^2
+
\sum_{n\in I}
\mu_n
\left\|
u_{\ell,t,n}^{\star}-c
\right\|^2.
\label{eq:refinement-codeword-splitting}
\end{align}
The first term is independent of \(c\). Hence the unique unconstrained minimizer \(c^\star\) from the statement is precisely the minimizer of
\[
c
\longmapsto
\sum_{n\in I}
\mu_n
\left\|
u_{\ell,t,n}^{\star}-c
\right\|^2.
\]
Because \(m_I>0\), this function is strictly convex, and its unique minimizer is the weighted centroid
\begin{equation}
\label{eq:explicit-refinement-target}
c^\star
=
\frac{1}{m_I}
\sum_{n\in I}
\mu_n
u_{\ell,t,n}^{\star}.
\end{equation}
Since
\[
u_{\ell,t,n}^{\star}
=
A R_\ell
\bigl(
e_n-\widehat e_{-\ell,t,n}
\bigr),
\]
where \(\widehat e_{-\ell,t,n}\) denotes the \(n\)-th row of \(\widehat E_{-\ell,t}\), we obtain the explicit formula
\begin{equation}
\label{eq:explicit-refinement-target-global}
c^\star
=
\frac{A}{m_I}
\sum_{n\in I}
\mu_n
R_\ell
\bigl(
e_n-\widehat e_{-\ell,t,n}
\bigr).
\end{equation}
Thus \(c^\star\) is not an abstract auxiliary point: it is the weighted centroid of the exact post-gluing chart target seen by the tokens currently assigned to \(c_{\mathrm{old}}\).

The weighted centroid identity now gives, for every \(c\in Z_\ell\),
\begin{equation}
\label{eq:centroid-pythagorean-refinement}
\sum_{n\in I}
\mu_n
\left\|
u_{\ell,t,n}^{\star}-c
\right\|^2
=
\sum_{n\in I}
\mu_n
\left\|
u_{\ell,t,n}^{\star}-c^\star
\right\|^2
+
m_I
\left\|
c-c^\star
\right\|^2.
\end{equation}
Indeed,
\[
u_{\ell,t,n}^{\star}-c
=
\bigl(
u_{\ell,t,n}^{\star}-c^\star
\bigr)
+
\bigl(
c^\star-c
\bigr),
\]

and the cross term vanishes because, by \eqref{eq:explicit-refinement-target},
$
\sum_{n\in I}\mu_n\bigl(u^\star_{\ell,t,n}-c^\star\bigr)=0.
$

Combining
\eqref{eq:post-gluing-stage-identity},
\eqref{eq:refinement-codeword-splitting},
and
\eqref{eq:centroid-pythagorean-refinement},
all terms independent of \(c\) can be collected into a constant \(C\), yielding
\begin{equation}
\label{eq:global-refinement-distance-identity}
\left\|
E-\widehat E(V(c))
\right\|_\mu^2
=
C
+
\frac{m_I}{A^2}
\left\|
c-c^\star
\right\|^2.
\end{equation}
Therefore the effect of replacing \(c_{\mathrm{old}}\) by any candidate \(c\) is measured exactly by its distance from the post-gluing optimum \(c^\star\). In particular, since
\[
\widehat E_{\mathrm{old}}
=
\widehat E(V(c_{\mathrm{old}}))
\]
and
\[
\widehat E_{\mathrm{new}}
=
\widehat E(V(c_{\mathrm{new}})),
\]
subtracting the two instances of
\eqref{eq:global-refinement-distance-identity}
gives the exact relation
\begin{equation}
\label{eq:exact-old-new-refinement}
\left\|
E-\widehat E_{\mathrm{new}}
\right\|_\mu^2
-
\left\|
E-\widehat E_{\mathrm{old}}
\right\|_\mu^2
=
\frac{m_I}{A^2}
\left(
\left\|
c_{\mathrm{new}}-c^\star
\right\|^2
-
\left\|
c_{\mathrm{old}}-c^\star
\right\|^2
\right).
\end{equation}

It remains to control how far the actual OrBIT update can lie from the unconstrained target \(c^\star\). The method does not insert \(c^\star\) directly into the decoder. Instead, the target is restricted to the learned scaffold, realized through the orbit dictionary, and compiled through the finite-bit decoder. Recall the definitions in the statement, 

\[
\delta_{\mathrm{scaf}}
=
\left\|
c^\star-\Pi_{S_\ell}c^\star
\right\|,
\qquad
\delta_{\mathrm{orb}}
=
\left\|
\Pi_{S_\ell}c^\star-\widetilde c
\right\|,
\qquad
\delta_{\mathrm{comp}}
=
\left\|
\widetilde c-c_{\mathrm{new}}
\right\|.
\]
Now write
\[
c^\star - \widetilde c
=
\underbrace{c^\star - \Pi_{S_\ell}c^\star}_{\in S_\ell^\perp}
+
\underbrace{\Pi_{S_\ell}c^\star - \widetilde c}_{\in S_\ell},
\]
where
$
\widetilde c
=
\operatorname{CCDS}_{D_\ell}(c^\star)
\in S_\ell.
$
Those two terms are exactly orthogonal. Hence
\[
\|c^\star - \widetilde c\|^2
=
\delta_{\rm scaf}^2
+
\delta_{\rm orb}^2.
\]

Since
$
c_{\rm new} - c^\star
=
(c_{\rm new} - \widetilde c)
+
(\widetilde c - c^\star).
$
By triangle inequality
\[
\|c_{\rm new} - c^\star\|
\le
\Delta
:=
\sqrt{\delta_{\rm scaf}^2 + \delta_{\rm orb}^2}
+
\delta_{\rm comp}.
\]

Substituting this estimate into
\eqref{eq:exact-old-new-refinement}
yields
\begin{align*}
\left\|
E-\widehat E_{\mathrm{new}}
\right\|_\mu^2
-
\left\|
E-\widehat E_{\mathrm{old}}
\right\|_\mu^2
&\leq
\frac{m_I}{A^2}
\left(
\Delta^2
-
\left\|
c_{\mathrm{old}}-c^\star
\right\|^2
\right).
\end{align*}
Equivalently,
\[
\left\|
E-\widehat E_{\mathrm{new}}
\right\|_\mu^2
\leq
\left\|
E-\widehat E_{\mathrm{old}}
\right\|_\mu^2
-
\frac{m_I}{A^2}
\left(
\left\|
c_{\mathrm{old}}-c^\star
\right\|^2
-
\Delta^2
\right),
\]
which is the claimed quantitative bound.

Finally, if
\[
\Delta
<
\left\|
c_{\mathrm{old}}-c^\star
\right\|,
\]
then
\[
\left\|
c_{\mathrm{old}}-c^\star
\right\|^2
-
\Delta^2
>
0.
\]
Since \(m_I>0\) and \(A>0\), the amount subtracted above is strictly positive. Therefore
\[
\left\|
E-\widehat E_{\mathrm{new}}
\right\|_\mu^2
<
\left\|
E-\widehat E_{\mathrm{old}}
\right\|_\mu^2.
\]
\qed

\textbf{Proof of Lemma \ref{lem:refinement-scaffold-error} (Refinement scaffold error as unexplained global residual).}
Recall we assumed the unit-gain normalization
$
R_\ell R_\ell^*=I_{\mathcal Z_\ell},
$
 canonical \(A\)-tight gluing, and we fixed a chart \(\ell\), stage \(t\), and codeword \(c_{\mathrm{old}}:=c_{\ell,t,r_0}\in \mathcal Z_\ell\) such that 
$$
I=\{n:i_{\ell,n,t}=r_0\}
\text{ with }
m_I:=\sum_{n\in I}\mu_n>0.
$$
For \(n\in I\), let \(\widehat e_{-\ell,t,n}\) denote the current global reconstruction with the contribution of \(c_{\mathrm{old}}\) removed. Under canonical \(A\)-tight gluing,
\[
\widehat e_{\mathrm{old},n}
=
\widehat e_{-\ell,t,n}
+
\frac{1}{A}R_\ell^*c_{\mathrm{old}}.
\]
Therefore
\[
e_n-\widehat e_{-\ell,t,n}
=
e_n-\widehat e_{\mathrm{old},n}
+
\frac{1}{A}R_\ell^*c_{\mathrm{old}}.
\]

Recall \(c^\star\in \mathcal Z_\ell\) denote the unique unconstrained minimizer of the conditional post-gluing distortion obtained by varying only \(c_{\mathrm{old}}\), with all remaining codewords and assignments fixed. By Equation (\ref{eq:explicit-refinement-target-global} 
$$
c^\star
=
\frac{A}{m_I}
\sum_{n\in I}
\mu_n
R_\ell
\bigl(
e_n-\widehat e_{-\ell,t,n}
\bigr).
$$

Substituting the identity for
\(e_n-\widehat e_{-\ell,t,n}\) yields
\begin{align*}
c^\star
&=
\frac{A}{m_I}
\sum_{n\in I}
\mu_n
R_\ell
\left(
e_n-\widehat e_{\mathrm{old},n}
+
\frac{1}{A}R_\ell^*c_{\mathrm{old}}
\right) \\
&=
\frac{A}{m_I}
\sum_{n\in I}
\mu_n
R_\ell
\bigl(
e_n-\widehat e_{\mathrm{old},n}
\bigr)
+
\frac{1}{m_I}
\sum_{n\in I}
\mu_n
R_\ell R_\ell^*c_{\mathrm{old}} \\
&=
\frac{A}{m_I}
\sum_{n\in I}
\mu_n
R_\ell
\bigl(
e_n-\widehat e_{\mathrm{old},n}
\bigr)
+
c_{\mathrm{old}}.
\end{align*}
Hence
\begin{equation}
\label{eq:c start eq}
c^\star
=
c_{\mathrm{old}}+
\frac{A}{m_I}
\sum_{n\in I}
\mu_n
R_\ell
\bigl(
e_n-\widehat e_{\mathrm{old},n}
\bigr).
\end{equation}
Applying \(I-\Pi_{S_\ell}\) to (\ref{eq:c start eq}) and taking norms  gives the first claim.

Assume now that \(c_{\mathrm{old}}\in S_\ell\), then
$
(I-\Pi_{S_\ell})c_{\mathrm{old}}=0.
$
Applying \(I-\Pi_{S_\ell}\) to (\ref{eq:c start eq}) and taking norms gives the second claim.
\qed

\textbf{Proof of Corollary \ref{cor:refinement-scaffold-compatibility} (Collapse under cross-chart scaffold compatibility).}
Canonical \(A\)-tight gluing gives
\[
\widehat e_{\mathrm{old},n}
=
\frac{1}{A}
\sum_{j=1}^L
R_j^*\widehat z_{j, n}.
\]
Applying \(R_\ell\) yields
\[
R_\ell\widehat e_{\mathrm{old},n}
=
\frac{1}{A}
\sum_{j=1}^L
R_\ell R_j^*\widehat z_{j, n}.
\]
Since $\widehat z_{j, n}\in S_j$, by the compatibility assumption,
\[
R_\ell R_j^*\widehat z_{j, n}
\in
R_\ell R_j^*S_j
\subseteq
S_\ell
\]
for every \(j\). Since \(S_\ell\) is a linear subspace,
$
R_\ell\widehat e_{\mathrm{old},n}
\in
S_\ell.
$
Therefore
\[
(I-\Pi_{S_\ell})
R_\ell\widehat e_{\mathrm{old},n}
=
0.
\]

Using Lemma~\ref{lem:refinement-scaffold-error},
\begin{align*}
\delta_{\mathrm{scaf}}
&=
\frac{A}{m_I}
\left\|
(I-\Pi_{S_\ell})
\sum_{n\in I}
\mu_n
R_\ell
\bigl(
e_n-\widehat e_{\mathrm{old},n}
\bigr)
\right\| \\
&=
\frac{A}{m_I}
\left\|
\sum_{n\in I}
\mu_n
(I-\Pi_{S_\ell})
R_\ell e_n
\right\|.
\end{align*}


Set 
\[
a_n
:=
(I-\Pi_{S_\ell})R_\ell e_n,
\qquad
\overline a_I
:=
\frac{1}{m_I}
\sum_{n\in I}\mu_n a_n.
\]
Hence
\begin{equation}
\label{eq:delta scaff}
\delta_{\rm scaf}
=
A\|\overline a_I\|.
\end{equation}

By weighted Cauchy--Schwarz,
\[
\left\|
\sum_{n\in I}\mu_n a_n
\right\|^2
\le
\left(\sum_{n\in I}\mu_n\right)
\left(\sum_{n\in I}\mu_n\|a_n\|^2\right)
=
m_I\sum_{n\in I}\mu_n\|a_n\|^2.
\]
Since
$
\overline a_I
=
\frac{1}{m_I}\sum_{n\in I}\mu_n a_n,
$
it follows that
\[
m_I\|\overline a_I\|^2
\le
\sum_{n\in I}\mu_n\|a_n\|^2.
\]
Now fix a stage \((\ell,t)\). For every nonempty assignment cell 
\[
I_r:=\{n:i_{\ell,n,t}=r\},
\qquad
m_{I_r}:=\sum_{n\in I_r}\mu_n>0,
\]
assume the same compatibility and scaffold-membership assumptions hold for every cell, hence by (\ref{eq:delta scaff})
\[
\frac{m_{I_r}}{A^2}\delta_{{\rm scaf},r}^2
=
m_{I_r}\|\overline a_{I_r}\|^2
\le
\sum_{n\in I_r}\mu_n\|a_n\|^2.
\]
The nonempty cells \(I_r\) form a partition of
\(\{1,\ldots,D\}\), so summing over \(r\) and taking into account the definition of the chart projection error (\ref{eq:scaffold-accept}) yields
\[
\sum_{r:\,m_{I_r}>0}
\frac{m_{I_r}}{A^2}\delta_{{\rm scaf},r}^2
\le
\sum_{n=1}^D
\mu_n\|(I-\Pi_{S_\ell})R_\ell e_n\|^2
=
\varepsilon_{{\rm proj},\ell}^2.
\]
\qed

\end{document}